\documentclass{article}

\usepackage[compress]{natbib}
\usepackage[letterpaper, top=1.0in, bottom=1.0in, left=1.5in, right=1.5in]{geometry}
\usepackage[utf8]{inputenc} %
\usepackage[T1]{fontenc}    %
\usepackage{lmodern}
\usepackage{hyperref}       %
\usepackage{url}            %
\usepackage{booktabs}       %
\usepackage{amsfonts}       %
\usepackage{nicefrac}       %
\usepackage{microtype}      %

\usepackage[symbol]{footmisc}  %
\renewcommand{\thefootnote}{\fnsymbol{footnote}}
\newcommand\blfootnote[1]{%
  \begingroup
  \renewcommand\thefootnote{}\footnote{#1}%
  \endgroup
}
\renewcommand{\thefootnote}{\arabic{footnote}}

\makeatletter
\newcommand{\sparagraph}{\@startsection{paragraph}{4}{\z@}%
                                    {2.25ex \@plus0.5ex \@minus0.2ex}%
                                    {-1em}%
                                    {\normalfont\normalsize\bfseries}}

\makeatother

\input{math_macros}

\usepackage{algorithmic} %
\usepackage{algorithm} %
\usepackage[capitalize]{cleveref}
\usepackage{enumitem}  %
\usepackage{tabularx,array,makecell} %
\usepackage{subcaption} %
\usepackage[flushleft]{threeparttable}  %
\newcommand{\hp}{\hphantom{0}} %

\usepackage{mathtools}  %

\newcommand{\uniformCorruptions}{additive query-independent corruptions}
\newcommand{\UniformCorruptions}{Additive query-independent corruptions}
\newcommand{\approvalrl}{approval}
\newcommand{\Approvalrl}{Approval}
\newcommand{\videolink}{https://youtu.be/aJsJzrrfZBM}

\usepackage{todonotes}

\font\titlefont=cmr12 at 15.5pt
\title{{\titlefont Avoiding Tampering Incentives in Deep RL \\
via Decoupled Approval}}

\author{
  Jonathan Uesato$^*$ \vspace{-0.05in} \\ \and
  Ramana Kumar$^*$ \vspace{-0.05in} \\ \and
  Victoria Krakovna \vspace{-0.05in} \\ \and
  Tom Everitt \hp \and
  Richard Ngo \hp \and
  Shane Legg \hp\hp\hp %
}
\date{}

\begin{document}

\maketitle

\begin{abstract}
How can we design agents that pursue a given objective when all feedback mechanisms are influenceable by the agent?
Standard RL algorithms assume a secure reward function, and can thus perform poorly in settings where agents can tamper with the reward-generating mechanism.
We present a principled solution to the problem of learning from influenceable feedback, which combines approval with a decoupled feedback collection procedure.
For a natural class of corruption functions, decoupled approval algorithms have aligned incentives both at convergence and for their local updates.
Empirically, they also scale to complex 3D environments where tampering is possible.
\end{abstract}

\section{Introduction}
\label{sec:intro}

\blfootnote{$^*$ Equal contribution.}
\blfootnote{DeepMind, London, UK. Correspondence to \texttt{\{juesato, ramanakumar\}@google.com}}
\setcounter{footnote}{0}
If reinforcement learning (RL) agents are to have a large influence in society, it is essential that we have reliable mechanisms to communicate our preferences to these systems.
In the standard RL paradigm, the role of communicating our preferences is played by the reward function.
However, it may not be possible to restrict sufficiently general RL agents from modifying physical implementations of their reward function, or more generally tampering with whatever process produces inputs to the learning algorithm, instead of pursuing the intended goal.
Our central concern is the \emph{tampering problem}, which can be summarized as:
\begin{center}
\emph{How can we design agents that pursue a given objective when all feedback mechanisms for describing that objective are influenceable
by the agent?}
\end{center}

As a simplified example, consider designing an automated personal assistant with the objective of being useful for its user.
A natural approach is to regularly query the user on their satisfaction with the system, and to optimize the expected user satisfaction.
This system may learn behaviors that alter the user's preferences to be easier to satisfy over time -- for example, towards a disposition for assigning high ratings, or preferences described by simple rules.
How could we design a system that instead conforms to the user's current preferences, without being incentivized to influence them?

Whether an RL system engages in tampering depends on both its capability to tamper and its incentives.
Even if current ML systems do not tamper, the same incentives may lead to tampering in more capable systems.
This motivates us to develop theory that is able to provide principled reasons why ML systems will avoid tampering in the future even as they become capable of increasingly complex reasoning.
We do this by formalizing what we mean by incentives, so we can establish formal guarantees on the behaviors encouraged by the learning algorithms we consider.
At the same time, these algorithms should be scalable -- able to solve the same tasks as current ML algorithms at similar cost.

We consider a simplified setting where agents learn from \emph{approval} feedback, i.e., information about the supervisor's preferred action, so that feedback need only be optimized myopically (\S\ref{sec:formalism}).
Even in this simplified setting, standard RL algorithms have an incentive to tamper with the feedback (\S\ref{sec:ad}).
To solve this problem, we consider \emph{decoupled approval} algorithms, which build on \emph{approval-direction} \citep{Christiano2014approval} and, more broadly, \emph{decoupled RL} \citep{everitt2018thesis}.
Intuitively, we would like the action being optimized to always be independent of corruptions to the feedback used to evaluate that action.
Decoupled approval proposes a specific mechanism to achieve this, by only optimizing for the feedback on a \emph{query} action, sampled independently from the action taken in the world.

We conduct extensive experiments in REALab \citep{kumar2020realab}, a 3D environment designed for studying tampering.
Our experimental results strongly support our theoretical claims across a broad range of agent designs.
In particular, while standard RL algorithms result in tampering, our decoupled approval agents avoid tampering and maintain strong task performance.
To our knowledge, our work is the first to empirically study tampering incentives when using neural network policies in the deep RL setting.

\paragraph{Our contributions}
We propose two algorithms, DA-PG and DA-QL, that apply decoupled approval to policy gradients and Q-Learning.
Theoretically, under the assumption that tampering actions affect all queries equally, we show that standard RL algorithms incentivize tampering, while our decoupled approval algorithms do not. %
We provide empirical evidence supporting our theoretical claims, and demonstrate favorable task performance and tampering behavior for decoupled approval agents in the deep RL setting.
As a result, these algorithms are the first to have both \emph{theoretical guarantees} against tampering and \emph{empirical evidence} of their compatibility with standard deep RL techniques.

\section{Technical Preliminaries}
\label{sec:formalism}

\paragraph{Corrupt feedback MDPs}
The general problem we study is how a user can reliably communicate a task to an agent, even though any piece of information can in principle be tampered with by the agent.
To study this problem formally, we use corrupt feedback MDPs (CFMDPs, \citep{kumar2020realab}).
CFMDPs extend Markov Decision Problems with feedback corruption (to allow tampering), and general feedback (to enable a broader class of learning algorithms).
For ease of reference, we briefly summarize the CFMDP formalism and introduce some notation.

First, an MDP $(\mcS, \mcA, p, f, r, \gamma)$ models the underlying environment and the user's intended task.
Here
$\mcS$ and $\mcA$ are sets of states and actions,
$p$ an initial state distribution,
$f$ a stochastic transition function, and
$r:\mcS\times\mcA\to\mathbb{R}$ a reward function describing the intended task.

On each step, the user provides the agent \emph{feedback}, a generalization of rewards.
Formally, the agent can submit a query $K_t\in\mcK$ to the user at each
time step, to which the user replies with feedback $D_{t+1} = \delta(S_t, K_t)\in \mcD$.\footnote{Choosing $\mcK = \mcA, \mcD = \mathbb{R}$, and $\delta = r$ corresponds to providing rewards as feedback.}
However, the agent only observes the corrupted feedback $\tilde D_{t+1} = c(S_{t+1}, K_t, D_{t+1})$, where $c: \mcS\times\mcK\times\mcD \to \mcD$ is the corruption function.

\begin{definition}[Corrupt feedback MDP]
  A CFMDP is a tuple $\mu=(\mcS, \mcA, \mcD,\mcK, p, f, r, \gamma, c)$.
\end{definition}

Corruption is measured by the difference between the true feedback $D_t$ and the agent's observed feedback $\tilde{D_t}$.
While agents are ultimately evaluated according to total reward, effective learning will often require designing agents which avoid corruption.

\paragraph{Approval feedback}
CFMDPs allow agents to learn from different forms of feedback, such as instantaneous rewards \citep{sutton1998introduction} or expert demonstrations \citep{ng2000algorithms, ross2011reduction}.
In this work, we focus on learning from \emph{approval} of agent actions.

\begin{definition}[Approval feedback, approval-optimal policy]
For \emph{approval feedback}, queries are expressed as actions $\mcK = \mcA$, and the feedback function $\delta$ expresses human approval for the query action.
If in state $s$, the human prefers the agent take action $a_1$ over $a_2$, then $\delta(s, a_1) > \delta(s, a_2)$.
When using approval feedback, an \emph{approval-optimal} policy is any policy $\pi^* \in \arg\max_{\pi \in\Pi} \ev_\pi\br{\delta(s, a)}, \forall s \in S$, where $\Pi$ is the set of all stationary policies from states to distributions over actions.
\end{definition}

Similar to \emph{value advice} feedback \citep{knox2009tamer,Daswani2014ValueAdvice}, but in contrast to instantaneous reward feedback, approval feedback for an action accounts for its long-term consequences.
Compared to value advice feedback, which provides a numerical estimate of total discounted rewards, approval feedback is only assumed to indicate \emph{relative} preferences between different actions.\footnote{Because value advice feedback will be consistent with relative preferences between actions, all our algorithms and results also apply to the value advice setting.}
Our theoretical results will assume that the user knows the environment well enough to give accurate approval, such that myopically optimizing expected approval implies optimizing expected discounted true returns.
We discuss the practical considerations of this assumption in \S\ref{subsec:practical_consid}.
Thus, our theoretical results focus on learning an approval-optimal policy.
In our experiments, we use a synthetic approval signal that satisfies this assumption, so approval-optimal policies also optimize the total return.

\paragraph{\UniformCorruptions}
With arbitrary corruption functions, learning is impossible \citep[Thm.~11]{Everitt2017CRMDP}.
For example, if the corruption function always overwrites the feedback with a constant, the learner receives no approval information.
To avoid this issue, we focus on \uniformCorruptions:

\begin{definition}[\UniformCorruptions]
\label{def:uniform_corruption}
  In CFMDPs with real-valued feedback, an \emph{additive, query-independent} corruption function can be expressed as $c(s, k, d) = d + c_s$, where there is a fixed offset $c_s\in \R$ for each state $s$.
\end{definition}

\UniformCorruptions{} encompass one of the simplest ways an agent can tamper with its feedback: taking an action that uniformly translates the numerical value of the feedback.
For example, consider an agent able to press a button that increments its feedback score.
The amount of corruption depends indirectly on the action the agent took (via the dependence on the state), but cannot depend on the agent's query.
Many realistic forms of corruption are not included under this assumption (see \S\ref{sec:discussion}). %
However, we focus on \uniformCorruptions{} both to simplify our analysis and because even under this restricted assumption, tampering incentives prevent standard algorithms from learning an optimal policy.
By contrast, decoupled approval agents have no incentive to tamper, despite that option being available.

\subsection{Two Types of Incentives}
\label{sec:incentive_defs}

Intuitively, our aim is to avoid algorithms that optimize the wrong objective.
But what does it mean for an algorithm to optimize an objective at all, and how do we determine which objective is being optimized?
In this paper, we consider two types of answers, which are instances of the standard notions of convergence to an optimal policy and policy improvement (both with respect to approval).
\emph{Convergence incentives} relate to behavior in the limit: setting aside questions of exploration and generalization, how will the policy behave?
\emph{Update incentives} relate to the local behavior of the algorithm before convergence: at each step, which direction does the algorithm push the policy?
We define both properties for sequences of policies $\pi_t:\mcS\leadsto\mcA$ produced by an iterative algorithm.%

\begin{definition}[Aligned convergence incentives]
A learning algorithm incentivizes policy $\pi$ iff $\lim_{t\to\infty}\pi_t = \pi$ (almost surely).
If $\pi$ is also approval-optimal, the algorithm is said to induce \emph{aligned convergence incentives}.
\end{definition}

\begin{definition}[Aligned update incentives]\label{def:local-incentives}
For a policy parameterized by $\theta_t$, a learning algorithm induces \emph{aligned update incentives} iff
  $\forall s, t: \overline{R}(\overline{\theta_{t+1}}) \geq \overline{R}(\theta_t)$,
where $\overline{R}(\theta) = \ev_{k\sim\pi_\theta(\cdot\mid s)}\br{\delta(s, k)}$ is the expected approval,
  and $\overline{\theta_{t+1}} = \ev\br{\theta_{t+1} \mid \theta_t, s}$ is the expectation of the updated parameters.
\end{definition}

In other words, aligned update incentives mean that for all states, the expected update increases expected approval for that state.
Intuitively, this means that the expected update shifts probability away from tampering actions (that receive higher observed feedback but lower true feedback).

Informally, we will use the language of \emph{incentives} to refer to the types of behavior that tend to result from policies produced by a learning algorithm.
While our formalizations match the standard notions of convergence and policy improvement in the non-corrupt feedback setting, our emphasis is different: incentives focus on the \emph{target} of optimization %
rather than its efficacy.
In the non-corrupt feedback setting, the optimization target can often be assumed to match the ideal target.
However, naive applications of algorithms which optimize effectively when given non-corrupt feedback can result in policies which perform worse than randomly for the corrupt feedback setting due to optimizing the wrong objective, as we shall see.

\section{Decoupled Approval}
\label{sec:ad}
\label{sec:algorithms}

At a high level, decoupled approval involves two ideas: learning from approval with myopic optimization, and decoupling.
We describe these here before presenting concrete algorithms. %

\paragraph{Learning from human approval with myopic optimization}
In methods based on approval such as TAMER \citep{knox2008tamer} and COACH \citep{celemin2015coach}, the feedback signal is human approval. %
At each step, the agent queries the user about a potential action, and the user provides real-valued feedback for the overall desirability of this action.
Since the feedback signal already captures overall consequences, as opposed to instantaneous rewards, approval should be optimized myopically, i.e., at each step, take the action with highest expected approval.
This means off-the-shelf RL algorithms can be adapted as approval-based methods by treating approval as reward and setting discount $\gamma=0$.

\paragraph{Decoupling}
In standard (non-decoupled) algorithms, the agent only gets feedback for the state-action pair $(s, a)$ when it takes action $a$ in state $s$.
Thus if taking action $a$ in state $s$ tampers with the feedback, the agent never gets uncorrupted feedback about this.
We use the term \emph{decoupled RL} to refer to algorithms that address this issue by learning from feedback on state-action pairs different from the current one \citep{Everitt2017CRMDP}.
The amount of decoupling can vary: whereas \citet{Everitt2017CRMDP} gives feedback on a random state, fully decoupled from the current state, approval-directed agents \citep{Christiano2014approval} use feedback on the current state about a query action, which may be distinct from the action taken in the environment.
Intuitively, this can increase learning efficiency by focusing feedback on states from the agent's current state visitation distribution.
Our approach, decoupled approval, is an instance of approval direction with a specific choice for the query policy: we use the same policy (up to exploration rates) for both query and taken actions, but sample the two actions independently. \\

\noindent \emph{Intuition for decoupled approval.}
To provide intuition for decoupled approval, we compare it to an (impractical) algorithm we call Human-in-the-loop Approval, shown in Figure \ref{fig:da_diagrams}.
In this algorithm, at each step, the agent sends a text description proposing an action to the human supervisor.
The human supervisor assigns an approval score to this action, only after which the action is taken by the agent.
The agent optimizes these approval scores.
Optionally, we could allow the agent to take no action if the approval is too low.
But even if the agent takes a tampering action it will have first received uncorrupted feedback on that action, since the feedback is received \emph{before} the actions are executed, and thus before tampering can occur.
Decoupled approval achieves a similar effect while allowing the supervisor to evaluate queries asynchronously.
Because the query and taken actions are chosen independently, queries about tampering actions will also be independent of when tampering actually occurs, and so these queries will not be reinforced. \\

\noindent \emph{Importance of decoupling.}
To isolate the effect of decoupling, consider an agent myopically optimizing for approval in a CFMDP.
In this setting, both decoupled approval and standard RL algorithms select actions to maximize approval.
The only difference is decoupling: standard RL agents always receive feedback about the action they take, whereas decoupled approval agents sample the query and taken actions independently (and optimize the policy using feedback on the queries).

\begin{remark}[Standard RL optimizes a corrupted signal]\label{remark:myopic}
Standard RL algorithms effectively treat each CFMDP as an MDP with reward function given by corrupted feedback.
At each step $t$, the agent observes state $S_t$, takes action $A_t$, and queries with $K_t = A_t$.
The agent then treats the observed approval $c(S_{t+1}, A_t, \delta(S_t, A_t))$ as the reward.
Assuming \uniformCorruptions, this simplifies to $c_{S_{t+1}} + \delta(S_t, A_t)$.
Optimizing this reward myopically with $\gamma=0$, the optimal policy is $\pi(s) = \argmax_a \ev\br{c_{S_{t+1}} \mid S_t = s, A_t = a} + \delta(s, a)$.
\end{remark}

From the perspective of the MDP learning algorithm, the policy successfully optimizes the given reward function.
However, from the perspective of the agent designer, the policy does not optimize the intended approval signal -- if the scale of $c_s$ is much larger than that of $\delta$, this policy could perform poorly with respect to $\delta$.
We discuss this in more detail in Appendix \ref{app:myopic-tampering}.
By adding decoupling, we will see that decoupled approval algorithms converge to $\pi(s) = \argmax_a \delta(s, a)$ as desired.

\begin{figure}[t]
\centering
\begin{subfigure}[b]{0.48\textwidth}
\includegraphics[width=\textwidth]{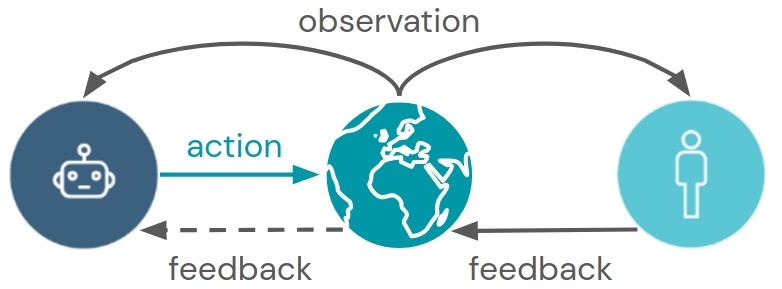}
\caption{\Approvalrl{} RL}
\label{subfig:standard_diagram}
\end{subfigure}\hfill
\begin{subfigure}[b]{0.48\textwidth}
\includegraphics[width=\textwidth]{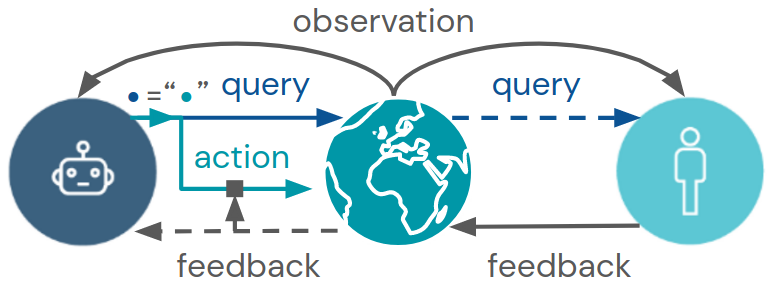}
\caption{Human-in-the-loop Approval}
\label{subfig:hitl_diagram}
\end{subfigure} \\[1ex]
\begin{subfigure}[b]{0.48\textwidth}
\includegraphics[width=\textwidth]{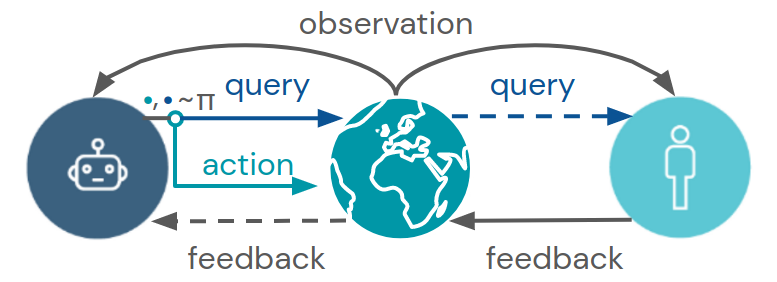}
\caption{Decoupled Approval}
\label{subfig:da_diagram}
\end{subfigure}
\caption{\textbf{Intuition for Decoupled Approval.}
In Standard / Approval RL (\ref{subfig:standard_diagram}), the agent has an incentive to take actions which lead to corrupting states producing high feedback.
Human-in-the-loop approval (\ref{subfig:hitl_diagram}) avoids this incentive by sending query actions to a human supervisor, who provides feedback on these actions \emph{before} they are executed, thus preventing the action from corrupting its own feedback.
Decoupled approval (\ref{subfig:da_diagram}) achieves similar benefits, but without a human-in-the-loop, by optimizing for the feedback on the query action, sampled independently from the action taken in the world.
}
\label{fig:da_diagrams}
\end{figure}

\subsection{Decoupled Approval Policy Gradients}

\label{subsec:adpg}

Decoupled approval policy gradients (DA-PG) extend policy gradients by independently sampling an action $a$ and a query $k$ from the same policy.
We provide pseudocode for DA-PG in Algorithm \ref{alg:adpg}.
Note that Algorithm \ref{alg:adpg} reduces exactly to the standard policy gradient algorithm (with approval feedback) if we take $k = a$ in Line \ref{line:adpg_diff}, rather than sampling both independently.

\begin{algorithm}
\caption{Decoupled Approval Policy Gradients (DA-PG)}
\label{alg:adpg}
\begin{algorithmic}[1]
\STATE Initialize the policy parameters $\theta_0=0$
  \FOR {$t=0$ {\bfseries to} $T$}
    \STATE Observe current state $s$
    \STATE Take action $a \sim \pi_{\theta_t}(s)$
           and query $k \sim \pi_{\theta_t}(s)$
           \label{line:adpg_diff}
    \STATE Receive next state $s' \sim f(s, a)$ and corrupted approval $\tilde d = c(s', k, \delta(s, k))$
    \STATE Update parameters via policy gradient:
    $\theta_{t+1} := \theta_t + \alpha \tilde d \nabla_\theta \log \pisk$
\ENDFOR
\end{algorithmic}
\end{algorithm}

\begin{proposition}
\label{obs:adpg_unbiased}
  In any state $s$, for any policy $\pi_\theta$, the expected update provided by DA-PG in a CFMDP with \uniformCorruptions{} is equivalent to the expected update provided by approval-based policy gradients in the corresponding uncorrupted MDP $\bar{\mu} = (\mcS, \mcA, p, f, \delta, \gamma)$.
That is,
\begin{equation*}
\ev_{\mu, \pi_\theta}\br{c(s', k, \delta(s, k)) \nabla_\theta \log \pisk}
= \; \ev_{\bar{\mu}, \pi_\theta}\br{\delta(s, a) \nabla_\theta \log \pisa}
\end{equation*}
\end{proposition}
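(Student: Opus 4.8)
The plan is to expand the corrupted feedback using the \uniformCorruptions{} assumption, split the resulting expectation into two terms, match one term to the right-hand side, and show the other vanishes by a score-function argument that decoupling makes available.

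First, by Definition~\ref{def:uniform_corruption} the corruption is additive and query-independent, so $c(s', k, \delta(s, k)) = \delta(s, k) + c_{s'}$. Substituting this into the left-hand side and using linearity of expectation gives
\begin{equation*}
\ev_{\mu, \pi_\theta}\br{\delta(s, k) \nabla_\theta \log \pisk} + \ev_{\mu, \pi_\theta}\br{c_{s'} \nabla_\theta \log \pisk}.
\end{equation*}
I would identify the first term directly with the right-hand side. In the sampling procedure of Algorithm~\ref{alg:adpg} the query $k$ is drawn from $\pi_\theta(\cdot \mid s)$, exactly as the action $a$ is drawn in the uncorrupted MDP $\bar{\mu}$ (where $\gamma=0$ makes the policy gradient use $\delta(s,a)$ as the reward). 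Since $\delta(s, k)$ and $\nabla_\theta \log \pisk$ depend only on $s$ and $k$, marginalizing out $a$ and $s'$ leaves an expectation over $k\sim\pi_\theta(\cdot\mid s)$ that, after relabeling $k$ as $a$, is precisely $\ev_{\bar{\mu}, \pi_\theta}\br{\delta(s, a) \nabla_\theta \log \pisa}$.

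The crux is showing the second term is zero, and this is exactly where decoupling does the work. The next state $s'$ is sampled from $f(s, a)$, which depends on the taken action $a$ but not on the query $k$, and $a$ and $k$ are drawn independently given $s$. Hence, conditional on $s$, the offset $c_{s'}$ (a function of $s'$ alone) is independent of $k$, so the expectation factors:
\begin{equation*}
\ev_{\mu, \pi_\theta}\br{c_{s'} \nabla_\theta \log \pisk} = \ev\br{c_{s'} \mid s}\,\cdot\,\ev_{k\sim\pi_\theta(\cdot\mid s)}\br{\nabla_\theta \log \pisk}.
\end{equation*}
The second factor is the familiar score-function expectation $\sum_k \pi_\theta(k\mid s)\,\nabla_\theta \log \pi_\theta(k\mid s) = \nabla_\theta \sum_k \pi_\theta(k\mid s) = \nabla_\theta 1 = 0$, so the whole term vanishes regardless of the magnitude of the corruption $c_{s'}$. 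Combining the two terms yields the claimed equality.

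The main obstacle is justifying the factorization in the second term, which rests entirely on the independence structure introduced by decoupling: $c_{s'}$ is a function of $a$ alone, the score $\nabla_\theta \log \pisk$ is a function of $k$ alone, and $a$ and $k$ are independent given $s$. This is precisely the property that fails for standard RL, where $k=a$ forces $c_{s'}$ and the score to share the same source of randomness; that coupling is what produces the corruption-dependent bias exhibited in Remark~\ref{remark:myopic}.
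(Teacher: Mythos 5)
Your proof is correct and follows essentially the same route as the paper's: expand the corruption via the additive query-independent assumption, split by linearity, factor the corruption term using the independence of $a$ and $k$ given $s$, and kill it with the zero-mean score function $\ev_{k\sim\pi_\theta(\cdot\mid s)}\br{\nabla_\theta \log \pisk} = 0$. You supply somewhat more detail than the paper on why the factorization is licensed by decoupling, but the argument is the same.
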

\begin{proof}
Assuming \uniformCorruptions, we have
\begin{align*}
&\; \ev_{\mu, \pi_\theta}\br{c(s', k, \delta(s, k)) \nabla_\theta \log \pisk} \\
=&\; \ev_{\mu, \pi_\theta}\brackets{(\delta(s, k) + c_{s'}) \nabla_\theta \log \pisk} \\
=&\; \ev_{\bar{\mu}, \pi_\theta}\brackets{\delta(s, a) \nabla_\theta \log \pisa} +
       \ev_{\mu, \pi_\theta}\brackets{c_{s'}} \ev_{\mu, \pi_\theta}\brackets{\nabla_\theta \log \pisk} \\
=&\; \ev_{\bar{\mu}, \pi_\theta}\brackets{\delta(s, a) \nabla_\theta \log \pisa}
\end{align*}
where the second equality uses the fact that $a$ and $k$ are sampled independently, and the third equality holds because the expectation of the score function is 0.
\end{proof}

Intuitively, the expected policy gradient depends only on the advantages at each step, and \uniformCorruptions{} preserve the relative advantage between any two actions, regardless of behavior policy.
We relate this property to incentives.

\begin{corollary}
  \label{cor:da-pg}
Consider policies parameterized by a mixture parameter $z$ over two experts $\pi_1$ and $\pi_2$: $\pi_z(a\mid s) = \sigma(z)\pi_1(a\mid s) + (1 - \sigma(z))\pi_2(a\mid s)$, where $\sigma(z) = (1+\exp(-x))^{-1}$ is the sigmoid function.
DA-PG applied to this policy parameterization induces aligned update incentives.
\end{corollary}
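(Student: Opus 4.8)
The plan is to reduce the expected DA-PG update on the scalar parameter $z$ to one-dimensional gradient ascent on the mixture's expected approval, and then to exploit the fact that this expected approval is \emph{globally monotonic} in $z$, which is exactly what makes a finite step safe.

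First I would invoke Proposition \ref{obs:adpg_unbiased}, which identifies the expected DA-PG update in state $s$ with the expected approval-based policy gradient in the uncorrupted MDP. Writing $\theta = z$ for the single scalar parameter, this gives $\overline{z_{t+1}} = z_t + \alpha\,\ev_{a\sim\pi_z(\cdot\mid s)}\br{\delta(s,a)\,\nabla_z\log\pi_z(a\mid s)}$, where $\alpha > 0$ is the learning rate. Importantly, this step is where decoupling is used: the proposition already accounts for the independent sampling of query and taken actions, so I inherit an \emph{uncorrupted} gradient for free and need not re-derive it.

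Next I would apply the score-function (REINFORCE) identity $\ev_{a\sim\pi_z}\br{\delta(s,a)\nabla_z\log\pi_z(a\mid s)} = \nabla_z\ev_{a\sim\pi_z}\br{\delta(s,a)} = \overline{R}'(z)$, so that the expected update is exactly $\overline{z_{t+1}} = z_t + \alpha\,\overline{R}'(z_t)$. I would then put $\overline{R}$ into closed form: setting $R_i \defeq \ev_{k\sim\pi_i(\cdot\mid s)}\br{\delta(s,k)}$ for $i\in\{1,2\}$, linearity of expectation over the mixture yields
\[
  \overline{R}(z) = \sigma(z)R_1 + (1-\sigma(z))R_2 = R_2 + \sigma(z)(R_1 - R_2),
  \qquad
  \overline{R}'(z) = \sigma'(z)(R_1 - R_2).
\]
Because $\sigma'(z) > 0$ for every finite $z$, the sign of $\overline{R}'(z)$ equals $\operatorname{sign}(R_1 - R_2)$ and is independent of $z$.

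The step I expect to require the most care is arguing that a \emph{finite}-size step increases $\overline{R}$, rather than merely that it ascends locally. This is precisely where the two-expert parameterization is essential: since $\overline{R}$ is affine in $\sigma(z)$ and $\sigma$ is strictly increasing, $\overline{R}$ is globally monotonic in $z$, with its direction of monotonicity fixed once and for all by $\operatorname{sign}(R_1 - R_2)$. Consequently, if $R_1 > R_2$ the update sends $z_t \mapsto z_t + \alpha\,\overline{R}'(z_t) > z_t$ and $\overline{R}$ increases; if $R_1 < R_2$ it moves $z$ leftward and $\overline{R}$ again increases; and if $R_1 = R_2$ then $\overline{R}$ is constant. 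In every case $\overline{R}(\overline{z_{t+1}}) \geq \overline{R}(z_t)$, which is exactly the aligned update incentive of Definition \ref{def:local-incentives}.
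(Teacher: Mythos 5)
Your proposal is correct and follows essentially the same route as the paper's proof in Appendix B: invoke Proposition \ref{obs:adpg_unbiased} to strip the corruption, compute the expected update as $\sigma'(z_t)\bigl(\overline{R}(\pi_1)-\overline{R}(\pi_2)\bigr)$, and conclude via the global monotonicity of $\overline{R}$ in $z$. Your explicit observation that the affine-in-$\sigma(z)$ structure is what licenses the \emph{finite}-step conclusion is a point the paper leaves implicit, but the argument is the same.
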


Because the DA-PG update is equal to the standard policy gradient update in expectation, Proposition \ref{obs:adpg_unbiased} also implies the gradient converges to zero, under the same conditions as Theorem 3 in \citet{sutton2000policy}.
In practical cases, this optimization will be non-convex so we cannot prove convergence incentives.
However, when the policy parameterization induces concave expected returns, we obtain convergence to a global optimum and thus aligned convergence incentives.

\emph{Robustness to scale.}
Notably, incentives for DA-PG do not rely on $\delta$ encoding any information about the corruption function $c$.
We refer to such evaluators as \emph{tampering-agnostic}.
This is important because it is typically difficult for the evaluator to recognize all possible forms of tampering.
In the language of \citet{Demski2019embedded}, decoupled approval exhibits \emph{robustness to relative scale} and to scaling up.
We contrast decoupled approval to approaches which rely on explicit feedback alone to discourage tampering \citep{saunders2017trial,reddy2019learning}, which may fail if the policy can discover forms of tampering undetected by the evaluator, though we believe these approaches complement decoupled approval, as noted in \S\ref{subsec:other_approaches}.
Our experiments in \S\ref{sec:experiments} will support robustness to scale of decoupled approval by using a tampering-agnostic evaluator.

\subsection{Decoupled Approval Q-Learning}
\label{subsec:adql}

DA-QL extends standard Q-learning by sampling query and taken actions independently, similarly to DA-PG.
The other modification is the introduction of an importance sampling correction (Line \ref{line:adql_is} in Algorithm \ref{alg:adql}),
which counterbalances the effect of Q-values being updated for some actions more frequently depending on the query policy.
Appendix \ref{app:importance-sampling} discusses a surprising failure mode, in which DA-QL converges to tampering policies when this correction is omitted.

\begin{algorithm}
\caption{Decoupled Approval Q-Learning (DA-QL)}
\label{alg:adql}
\begin{algorithmic}[1]
\STATE Set initial Q-values $Q_0(s,k)=0$ for all $s,k$
\STATE Set visit count $M_0(s_0) = 1$ for initial state $s_0$, and $M_0(s)=0$ for all other states $s\not=s_0$
\FOR {$t=0$ {\bfseries to} $T$}
    \STATE Observe current state $s$
    \STATE $\pi_A := \epsilon\textrm{-greedy}(Q_t), \epsilon = 1/M_t(s)$
    \STATE $\pi_K := \epsilon\textrm{-greedy}(Q_t), \epsilon = \max(1/M_t(s),  \lrscale \size{A})$
    \STATE Take action $a \sim \pi_A(s)$
           and query $k \sim \pi_K(s)$
    \STATE $\alpha := \lrscale(M_t(s) \pi_K(k \mid s))^{-1}$ \label{line:adql_is}
    \STATE Receive next state $s' \sim f(s, a)$ and corrupted approval $\tilde d = c(s', k, \delta(s, k))$
    \STATE $Q_{t+1}(s, k) := (1 - \alpha) Q_t(s, k) + \alpha \tilde d$
    \STATE $M_{t + 1}(s') := M_t(s') + 1$
\ENDFOR
\end{algorithmic}
\end{algorithm}

We first analyze the local properties of DA-QL in terms of the expected update at each step.
\begin{proposition}
\label{prop:adql_local_incentives}
In DA-QL under \uniformCorruptions, for all $s, k, t$, we have
$$\ev\br{\xi_t(k) \mid H_t} =   h_1 (\delta(s,k) - Q_t(s,k)) + h_2$$
where $\xi_t(k) = Q_{t+1}(s,k) - Q_t(s,k)$, $H_t$ is the history up to time $t$ (including the current state $s$ but not $a$ or $k$), and $h_1, h_2$ are functions of $H_t$.
In particular, whenever $\delta(s, k_1) > \delta(s, k_2)$, at least one of the following holds:
\begin{equation*}
\ev\br{\xi_t(k_1)\mid H_t} > \ev\br{\xi_t(k_2)\mid H_t} \;\;\;\; or \;\;\;\;
Q_t(s, k_1) > Q_t(s, k_2)
\end{equation*}
\end{proposition}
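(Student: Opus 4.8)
The plan is to compute the one-step conditional expectation $\ev\br{\xi_t(k) \given H_t}$ directly from the DA-QL update rule and then read off the ``in particular'' claim by a sign argument. The first step is to observe that at step $t$ the algorithm updates exactly one Q-value, namely $Q(s, K_t)$ for the sampled query $K_t$. Hence for a fixed query value $k$ the increment $\xi_t(k) = Q_{t+1}(s,k) - Q_t(s,k)$ equals $\alpha(\tilde d - Q_t(s,k))$ on the event $\{K_t = k\}$ and is $0$ otherwise. Conditioning on $H_t$ and splitting on whether $K_t = k$ gives
\[
\ev\br{\xi_t(k) \given H_t} = \pi_K(k \mid s)\,\alpha\,\ev\br{\tilde d - Q_t(s,k) \given H_t, K_t = k},
\]
where $\alpha = \lrscale(M_t(s)\pi_K(k\mid s))^{-1}$ is held fixed by the conditioning on $K_t = k$.

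The two substantive steps follow. Using \uniformCorruptions{} I would write $\tilde d = \delta(s,k) + c_{s'}$, and then use decoupling — that $a$, and hence $s'\sim f(s,a)$, is sampled independently of $k$ — to conclude $\ev\br{c_{s'} \given H_t, K_t = k} = \ev\br{c_{s'} \given H_t} =: \bar c$, a quantity depending on $H_t$ but \emph{not} on $k$. Next I would note that the importance-sampling factor $\pi_K(k\mid s)^{-1}$ inside $\alpha$ cancels the $\pi_K(k\mid s)$ coming from the probability of querying $k$, leaving
\[
\ev\br{\xi_t(k) \given H_t} = \frac{\lrscale}{M_t(s)}\parens{\delta(s,k) - Q_t(s,k)} + \frac{\lrscale}{M_t(s)}\,\bar c.
\]
Setting $h_1 = \lrscale/M_t(s)$ and $h_2 = h_1\bar c$ — both functions of $H_t$, with $h_1 > 0$ since $M_t(s)\ge 1$ whenever $s$ is observed — yields the stated identity.

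For the final claim I would argue by contradiction: suppose $\delta(s,k_1) > \delta(s,k_2)$ yet both displayed inequalities fail, so that $\ev\br{\xi_t(k_1)\given H_t} \le \ev\br{\xi_t(k_2)\given H_t}$ and $Q_t(s,k_1) \le Q_t(s,k_2)$. Subtracting the two instances of the identity (the $h_2$ terms cancel) and dividing by $h_1 > 0$ turns the first inequality into $\delta(s,k_1) - \delta(s,k_2) \le Q_t(s,k_1) - Q_t(s,k_2)$. Since the left side is strictly positive, this forces $Q_t(s,k_1) > Q_t(s,k_2)$, contradicting the second assumption.

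The only real obstacle is bookkeeping in the conditional expectation: one must keep clear that $H_t$ fixes $s$, $Q_t$, $M_t$ and the policies $\pi_A,\pi_K$, while the averaged randomness is $(a,k,s')$. The two places where the hypotheses do genuine work are (i) decoupling, which makes $\bar c$ a $k$-independent additive offset absorbed into $h_2$ rather than a $k$-dependent bias, and (ii) the importance-sampling learning rate, which makes the multiplier $h_1$ independent of $k$; together these produce the affine-in-$(\delta - Q)$ form on which the sign argument relies. Absent the importance-sampling correction, $h_1$ would depend on $k$ through $\pi_K(k\mid s)$ and the argument would break, matching the failure mode discussed in the appendix.
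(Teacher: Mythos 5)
Your proposal is correct and follows essentially the same route as the paper's proof: both reduce the expected increment to $h_1(\delta(s,k)-Q_t(s,k))+h_2$ by (i) using decoupling plus \uniformCorruptions{} to make the expected corruption a $k$-independent offset and (ii) using the importance-sampling learning rate to make the expected step size $\lrscale/M_t(s)$ independent of $k$ (the paper packages this as a separate lemma on $\ev[\alpha_t(s,k)\mid H_t]$, you derive it inline via the cancellation). The concluding sign argument is the same subtraction, merely phrased as a contradiction rather than directly.
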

\begin{proof}
Refer to Appendix \ref{app:adql-local}.
\end{proof}
Proposition \ref{prop:adql_local_incentives} establishes an analogue of aligned local update incentives for DA-QL in the Q-learning setting (where the parameterization of Definition \ref{def:local-incentives} is not a natural fit).
Whenever the user prefers $k_1$ over $k_2$, the expected update increases $Q(s, k_1)$ relative to $Q(s, k_2)$, unless $Q$ already prefers $k_1$. %

We now analyze the global properties of DA-QL.
Our main result states that DA-QL converges to an approval-optimal policy, and thus induces aligned convergence incentives:
\begin{proposition}\label{thm:adql}
Let $k, k' \in \mcK$ be two distinct query actions.
For DA-QL in CFMDPs with \uniformCorruptions, the following expression converges almost surely to 0 as $t\to\infty$:
\begin{align*}
\left(Q_t(s,k)-Q_t(s,k')\right)-
\left(\delta(s,k)-\delta(s,k')\right)
\end{align*}
\end{proposition}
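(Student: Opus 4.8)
The plan is to analyse the shifted quantities $W_t(k) \defeq Q_t(s,k) - \delta(s,k)$ at the fixed state $s$, since the target expression is exactly $\Delta_t \defeq W_t(k) - W_t(k')$ and the goal is to show $\Delta_t \to 0$ almost surely. The point of working with $W$ rather than $Q$ is that, under \uniformCorruptions{}, the observed feedback splits additively as $\tilde d = \delta(s,k) + c_{s'}$, so the task term $\delta$ is absorbed into the definition of $W$ and only the corruption offset $c_{s'}$ drives the residual dynamics.

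First I would extract the one-step mean dynamics directly from Proposition \ref{prop:adql_local_incentives}, which states that $\ev\br{Q_{t+1}(s,k) - Q_t(s,k) \given H_t} = h_1\parens*{\delta(s,k) - Q_t(s,k)} + h_2$ with $h_1, h_2$ depending only on $H_t$ and \emph{not} on the queried action. Rewriting in terms of $W$, this reads $\ev\br{W_{t+1}(k) - W_t(k)\given H_t} = -h_1 W_t(k) + h_2$. Subtracting the same identity for $k'$, the action-independent drift $h_2$ cancels and I obtain the clean contraction in expectation
\[
\ev\br{\Delta_{t+1}\given H_t} = \parens*{1 - h_1}\Delta_t .
\]
This cancellation is the heart of the argument: the corruption enters only through $h_2$, which is common to every query because the next state $s'$ is generated by the taken action $a$, sampled independently of $k$, so that corruption is query-independent; crucially $h_2$ itself need not converge (the behaviour policy $\pi_A$ keeps drifting as $\epsilon$ shrinks), yet it drops out of the difference.

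Next I would cast this as a stochastic-approximation recursion $\Delta_{t+1} = (1 - h_1)\Delta_t + \eta_t$, where $\eta_t$ is a martingale-difference noise with $\ev\br{\eta_t \given H_t} = 0$. Re-indexing by the successive visits to $s$ (and assuming, as is standard for $Q$-learning convergence, that $s$ is visited infinitely often, so its count $M \to \infty$), the effective contraction rate is $h_1 = \lrscale / M \asymp \lrscale/j$ at the $j$-th visit, giving $\sum_j h_1 = \infty$ and $\sum_j h_1^2 < \infty$. The query-side exploration floor $\epsilon \ge \lrscale\,\size{A}$ guarantees $\pi_K(k \given s) \ge \lrscale$ for every $k$, which plays a double role: it keeps every realized learning rate $\alpha = \lrscale\,(M\,\pi_K(k\given s))^{-1}$ in $[0,1]$ (so the $Q$-values, and hence $\Delta_t$, stay bounded) and it bounds the importance weights, yielding a conditional noise variance $\ev\br{\eta_t^2 \given H_t} = O(1/j^2)$, which is summable.

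Finally I would conclude via a standard martingale/stochastic-approximation result. Applying the Robbins--Siegmund theorem to $V_j \defeq \Delta_j^2$ (using $(1-h_1)^2 \le 1 - h_1$) shows that $V_j$ converges almost surely and that $\sum_j h_1 V_j < \infty$; since $\sum_j h_1 = \infty$ this forces $V_j \to 0$, i.e. $\Delta_t \to 0$ almost surely, as claimed. I expect the main obstacle to be the stochastic-approximation bookkeeping around the asynchronous, importance-weighted updates — verifying the step-size and summable-variance conditions on the visit subsequence, and the boundedness needed for the noise bound — rather than the algebra: once Proposition \ref{prop:adql_local_incentives} supplies the query-independent drift $h_2$, the essential structural insight (that $h_2$ cancels in the difference even though it may never settle) is already in hand.
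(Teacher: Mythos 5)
Your proposal is correct and takes essentially the same route as the paper's proof: both work with $Y_t(s,k) = Q_t(s,k) - \delta(s,k)$, take the difference across the two queries so that the query-independent corruption drift (equalized across queries by the importance-sampling correction) cancels, and then verify the standard step-size, zero-conditional-mean, and bounded-variance conditions along the visit subsequence. The only difference is which off-the-shelf convergence result closes the argument — the paper instantiates the stochastic-approximation theorem of Jaakkola et al., whereas you apply Robbins--Siegmund to $\Delta_t^2$; the two are interchangeable here.
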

\begin{proof}[Proof Sketch]
Our proof follows the convergence proof for standard Q-Learning from ~\citep{Jaakkola1993convergence}.
The value of the expression above after each DA-QL update decomposes into a contraction term of its previous value, and a term which depends on the observed feedback.
With the assumption of \uniformCorruptions{} and the importance sampling correction, this second term has expectation zero.
Appropriately chosen learning rates also ensure bounded variance.
We can then instantiate the stochastic convergence theorem from \citet{Jaakkola1993convergence} for the above expression.
The full proof is in Appendix \ref{app:adql-global}.
\end{proof}
\begin{remark}[DA-QL recovers optimal policy]
Let the action ranking induced by a function $f(s, k)$ for state $s$ be the list of actions $k$ in decreasing order of $f(s,k)$.
Since the action-value differences $Q_t(s,k)-Q_t(s,k')$ converge to the feedback differences $\delta(s,k)-\delta(s,k')$, the action ranking for each state induced by $Q_t$ converges to the action ranking induced by the feedback function $\delta$.
Thus, a policy that maximizes $Q_t(s,\cdot)$ will converge to an approval-optimal policy.
\end{remark}

\section{Experiments}
\label{sec:experiments}

\subsection{Embedded Feedback Tasks}
\label{subsec:realab}
To study tampering empirically, we use REALab \citep{kumar2020realab}, a platform for simulated environments with \emph{embedded feedback} to model the tampering problem.
For ease of reference, we briefly describe REALab here, but refer readers to \citet{kumar2020realab} for details.
In REALab, all mechanisms for providing feedback to the agent are potentially influencable by the agent (as in CFMDPs) because they are made from influenceable components, primarily \emph{registers}.
In REALab, as in CFMDPs, agent designers specify a feedback function, which writes the feedback (such as instantaneous rewards or approval) to a \emph{feedback register}.

Registers are physical objects used to store a value by representing it as a physical property.
Our experiments use \texttt{TwoBlockRegister}s composed of a base and offset block, which encode values using the distance between the two blocks (see \S3.1 of \citet{kumar2020realab}).
Crucially, values stored in registers can be corrupted by physical interactions, such as the agent pushing or throwing the register blocks.
For \texttt{TwoBlockRegister}s, tampering with the offset block will corrupt the observed feedback for the next timestep, but tampering with the base block will corrupt the observed feedback for all subsequent timesteps.

Other than the use of embedded feedback, REALab tasks are standard and similar to tasks in DMLab-30 \citep{beattie2016deepmind}.
Each task requires controlling an avatar with a first-person view of the environment to collect apples within some fixed time duration.
We primarily evaluate on the \emph{Unlock Door} task.
In this task, there are small apples providing reward 1 within a large room.
Large apples provide reward 10, and can be accessed by standing on a sensor, which unlocks a door to a room containing the apple.
Because collecting the large apple ends the episode, the optimal strategy is to first gather the small apples, before visiting the sensor and collecting the large apple.
We also include additional experiments on a \emph{Seek-Avoid} task and on tabular CFMDPs in Appendices \ref{app:seekavoid} and \ref{app:tabular_cfmdp}.

\subsection{Agent Implementation}
\label{subsec:exp_design}

We evaluate both DA-PG (Algorithm \ref{alg:adpg}) and DA-QL (Algorithm \ref{alg:adql}), when combined with deep neural networks for function approximation.
Our base agent is R2D2 \citep{kapturowski2018recurrent} and where relevant, we use the same network architecture and hyperparameters from \citet{kapturowski2018recurrent}.
Our policy gradient algorithms are based on PPO \citep{schulman2017proximal}.
For the PPO loss, we use default hyperparameters reported in \citet{schulman2017proximal}: clipping factor $\epsilon=0.2$, shared parameters for policy and value networks with value loss weight $c=1.0$, no entropy bonus, and we ignore the GAE parameter $\lambda$ since our algorithms are optimized myopically.
For PPO, we additionally add to the network a linear layer to predict the value baseline from the final layer of activations.
All other architecture choices and hyperparameters are shared for both Q-learning and policy gradient algorithms.

In both cases, the required change to the learning algorithm to implement decoupled approval is simple -- we independently sample both a query action, provided to the feedback function through a query register, and an action to take directly in the environment, then optimize the policy for approval with discount $\gamma=0$.
The approval signal is defined using a Q-network from the Non-embedded agent in \S\ref{subsec:comparisons}.
This Q-network is pre-trained on the non-embedded version of each task (with oracle access to the true rewards).
For query $k$, we provide feedback $\delta(s, k) = Q(s, k) - \sum_k Q(s, k) / |\mcA|$.\footnote{
In real-world tasks without a programatically defined reward or approval function, this approval signal would be provided by human evaluation instead.
\S\ref{subsec:practical_consid} elaborates on the strengths and weaknesses of actual human feedback compared to our simulated approver.}

We train all agents on 2 TPUv3 cores with 80 distributed actors, and report performance after 8 hours, which corresponds to roughly 100 million total environment steps (though many experiments saturate performance in much less time), except for Standard RL agents which we run for 24 hours.
Our emphasis in this work is on agent incentives, and as such, we only study behavior in the limit.
However, we also note the importance of simultaneously studying sample-efficient learning with approval-based algorithms generally, when using limited human feedback, e.g., \citet{warnell2018deep} and \citet{arumugam2019deep}.

\subsection{Evaluation}
In our experiments, we focus on three main questions:
\begin{enumerate}[label=\textbf{Q{\arabic*}}., noitemsep,topsep=0pt]
\item Is tampering a practical problem for deep RL agents trained on embedded feedback?
\item Does decoupled approval address the problem?
\item How do different algorithms compare on asymptotic task performance?
\end{enumerate}

We measure task performance in the standard way: each environment defines a reward function, and agents are evaluated according to expected total return.
Thus, although algorithms may differ in feedback function (rewards, approval, etc.), all are evaluated on the same metric: total return.

We measure tampering with consideration of how feedback can become corrupted.
We refer to the output of the feedback function, used to set the value of the feedback register, as the \emph{true feedback}, and the value of the feedback register when observed by the agent as the \emph{observed feedback}.
When the agent does not disrupt the normal physical operation of the feedback register, the difference between true and observed feedback is 0 (except for small variations due to noise in the physics simulation).
Thus, we quantify tampering as the discrepancy between true and observed feedback.
Specifically, if the difference exceeds a fixed threshold of $10.0$ at any point within an episode, we classify that episode as one in which tampering occurred.

\subsection{Comparisons}
\label{subsec:comparisons}

We compare decoupled approval agents to several other deep RL algorithms.
Here we describe all our agents:

\begin{description}
\item[Non-embedded]
  This is the only agent trained with oracle access to true rewards, rather than observed rewards from registers.
  The Q-network from this agent is used to define the approval signal for approval-based algorithms.
  Thus, the tampering rates and task performance achieved by the Non-embedded agent represent an ideal to aim for with the approval-based algorithms.
\item[Standard RL]
  This agent (also based on R2D2 \citep{kapturowski2018recurrent}) uses the true reward function as the feedback function $\delta$, optimizing discounted returns as observed through registers (and hence potentially corrupted).
\item[DA-QL / DA-PG]
  These agents use approval feedback, as described above (\S\ref{subsec:exp_design}).
\item[\Approvalrl{} QL / \Approvalrl{} PG]
  The \Approvalrl{} agents \citep{knox2008tamer,knox2009tamer} are otherwise equivalent to their Decoupled Approval counterparts, but always query feedback for their current action, rather than sampling a separate query action.
  We provide this baseline since it is the most direct instantiation of RL from approval -- treat approval as reward, and optimize myopically with RL as usual.
  This allows us to analyze whether decoupling is important for avoiding tampering incentives, or whether myopic optimization alone does most of the work.
  We caution against over-extrapolation: as noted in \S\ref{sec:myopic_sufficiency}, the effect of myopic optimization is likely task-dependent, and may be either sufficient or insufficient depending on the scope of individual actions.
\item[Tampering Detector]
  We run Standard RL and augment the true reward function with a penalty term designed to discourage tampering.
  For the penalty term, we first train a neural network to detect tampering, using a labeled dataset of tampering and non-tampering trajectories generated with Standard RL and Non-embedded agents.
  We use the predicted tampering probability as a weighted negative penalty in the reward function.
  Further, to prevent the agent from continuing to tamper and overriding this penalty, we terminate the episode whenever the probability exceeds $0.5$.
  We validate the accuracy of this tampering detector on a held-out Standard RL agent.

\item[Offline Imitation]
  This agent is trained with behavioral cloning on a cross-entropy loss.
  Specifically, during training time, all actions are drawn from a demonstrator policy, while for evaluation, we use the agent's own policy.
  The learner optimizes the cross-entropy loss $L(\theta) = \hat{\ev}_t\br{-\log \pi_\theta(a_t \mid s_t)}$, where $\hat{\ev}_t$ denotes the empirical expectation over states and actions when following the demonstrator policy.
  When using embedded feedback, the feedback function outputs the demonstrator action, encoded as an integer.
\item[DAgger]
  The DAgger agent is otherwise equivalent to Offline Imitation, but uses the DAgger algorithm to address issues caused by distribution shift between demonstrator and agent trajectories \citep{ross2011reduction}.
  During training, we sample actions according to the agent policy with probability $0.9$, and according to the demonstrator policy with remaining probability $0.1$.
\end{description}

\subsection{Results}
\label{subsec:tampering_results}

\begin{table}[ht]
\centering
\begin{threeparttable}
  \begin{tabular}{p{0.4\columnwidth}cc}
\toprule
    \textbf{Algorithm} &  \thead{\textbf{Tampering} \\ \textbf{(\% Episodes)}} & \thead{\textbf{Return} \\ \textbf{(Mean)}}\tabularnewline
\midrule
    Standard RL & 78\% & \hp1.0 \tabularnewline
    Tamp.\ Detector & 84\% & \hp1.2 \tabularnewline
    Non-embedded $^*$ & \hp6\% & 14.8 \tabularnewline
    Tamp.\ Detector (non-embedded) $^*$ & \hp2\% & 14.4 \tabularnewline
\midrule
    \Approvalrl{} QL & 26\% & \hp9.5 \tabularnewline
    DA-QL (no IS corr.) & 19\% & 10.2 \tabularnewline
    DA-QL & 12\% & 11.9 \tabularnewline
\midrule
	\Approvalrl{} PG & 18\% & 13.4 \tabularnewline
    DA-PG & \hp6\% & 13.7 \tabularnewline
\midrule
   Offline Imitation $^*$ & \hp8\% & 14.0 \tabularnewline
   DAgger $^*$ & \hp8\% & 14.0 \tabularnewline
\bottomrule
\end{tabular}
\vspace{1mm}
\begin{tablenotes}
\small
\item $^*$ Trained purely with non-embedded feedback
\end{tablenotes}
\end{threeparttable}
  \caption{\textbf{Unlock Door results.}
Standard RL agents naturally result in tampering, while
decoupled approval algorithms improve over their non-decoupled approval counterparts both in terms of tampering rates and task performance.
}
\label{tab:tampering_results}
\end{table}

Table \ref{tab:tampering_results} summarizes asymptotic task performance and percentage of episodes containing tampering for our experiments in the Unlock Door task.
Below, we summarize our key observations for each agent.
For qualitative results, we encourage interested readers to view our accompanying agent videos at \href{\videolink}{\videolink}.

\sparagraph{Standard RL}
As hypothesized, the Standard RL agent converges to a tampering solution -- the agent tampers in 78\% of episodes overall.

\sparagraph{\Approvalrl{} RL}
The use of myopic optimization in \Approvalrl{} RL significantly reduces tampering, since the agent lacks incentives to create future opportunities to tamper.
We observe that the agent tampers if and only if it encounters registers (i.e., tampering opportunities) along the trajectory of solving the intended task.
In Figure \ref{subfig:myopic}, the agent initially gathers apples (the intended behavior) until it encounters a register.
It then throws this register so as to achieve high observed reward, before proceeding to solve the intended task.
This demonstrates characteristic behavior of myopically optimized agents -- they take opportunities for tampering when present, but they do not select actions to create future opportunities to tamper.
In \S\ref{subsec:other_approaches}, we discuss the implications for applying \Approvalrl{} RL in the real world.

\sparagraph{Decoupled Approval}
Consistent with our theoretical results, both DA-PG and DA-QL result in significantly less tampering than their non-decoupled approval counterparts.
Further, DA-QL tampers more often when the importance sampling correction is ablated.
However, while the tampering rate for DA-PG exactly matches the Non-embedded agent (6\%), for DA-QL it is slightly higher -- one possible explanation is that nonlinear function approximation reduces the effectiveness of the importance sampling correction, though other explanations are also possible.

Notably, our experiments use a \textit{tampering-agnostic} evaluator, which makes no effort to provide negative reward when the agent tampers.
As discussed in \S\ref{subsec:adpg}, this resembles real-world applications where it is difficult to detect tampering during training.
In such cases, we cannot expect decoupled approval agents to actively avoid feedback corruptions, but it is desirable that they do not pursue corruptions.
Our empirical results generally support this view: while tampering rates for DA-PG and DA-QL are not zero, they are similar to tampering rates for the Non-embedded agent (6\% and 12\% vs.\ 6\%). \\

\noindent
\emph{Task performance.}
Decoupled approval algorithms achieve competitive asymptotic performance compared to the Non-embedded RL agent, despite having the option to tamper.
In particular, they achieve higher returns than both their non-decoupled counterparts and the standard RL agent, which exclusively tampers at the expense of task performance.

\sparagraph{Tampering Detector}
First, to validate the accuracy of our learned tampering detector, we apply it to a pretrained held-out Standard RL agent.
For this agent, the detector fails to fire in 0\% of episodes in which tampering occurs, and fires (resulting in episode termination) in 98\% of all episodes.
We further validate that the detector provides a useful training signal, by adding the detector to the Non-embedded agent.
This decreases tampering from 6\% to 2\%, though does not eliminate tampering completely, while the agent still achieves high return.
For the Non-embedded agent, 0\% of episodes are terminated early by the tampering detector.

In the embedded setting, however, the Tampering Detector agent eventually learns to bypass the detector (though this happens later in training than for the Standard RL agent).
Qualitatively, the agent spins between the two Register blocks, and moves each block a little bit at a time, as opposed to the non-shaped Standard RL agent, which simply moves a single block to the desired location, without alternating between the two blocks (see video supplement).
In contrast to the held-out Standard RL agent, which also attempts to tamper, the Tampering Detector agent bypasses the detector, tampering without triggering an episode termination in 46\% of episodes.

\sparagraph{Imitation Learning}
Both imitation learning variants, offline imitation and DAgger, achieve good results in the non-embedded setting.
With embedded feedback, we observed significantly worse performance, which we believe is largely due to specifics in our current register implementations (see Appendix \ref{app:imitation_exps}).

\sparagraph{Other results}
We include additional experiments on all our main agents for the Seek-Avoid task in Appendix \ref{app:seekavoid} and on procedurally generated tabular CFMDPs in Appendix \ref{app:tabular_cfmdp}.

\begin{figure}[ht]
\centering
\begin{subfigure}[b]{0.48\textwidth}
\includegraphics[width=\textwidth]{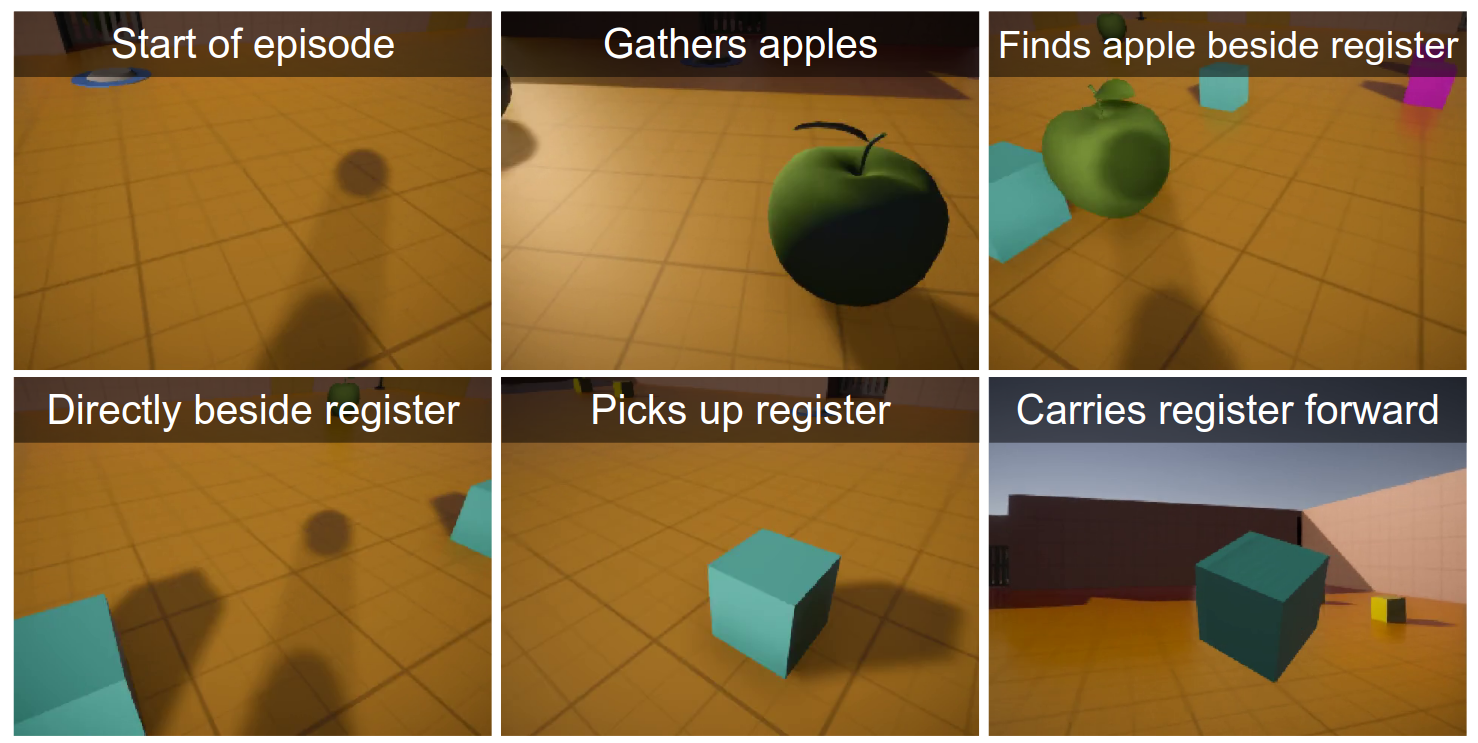}
\caption{\Approvalrl{} RL Agent}
\label{subfig:myopic}
\end{subfigure}\hfill
\begin{subfigure}[b]{0.48\textwidth}
\includegraphics[width=\textwidth]{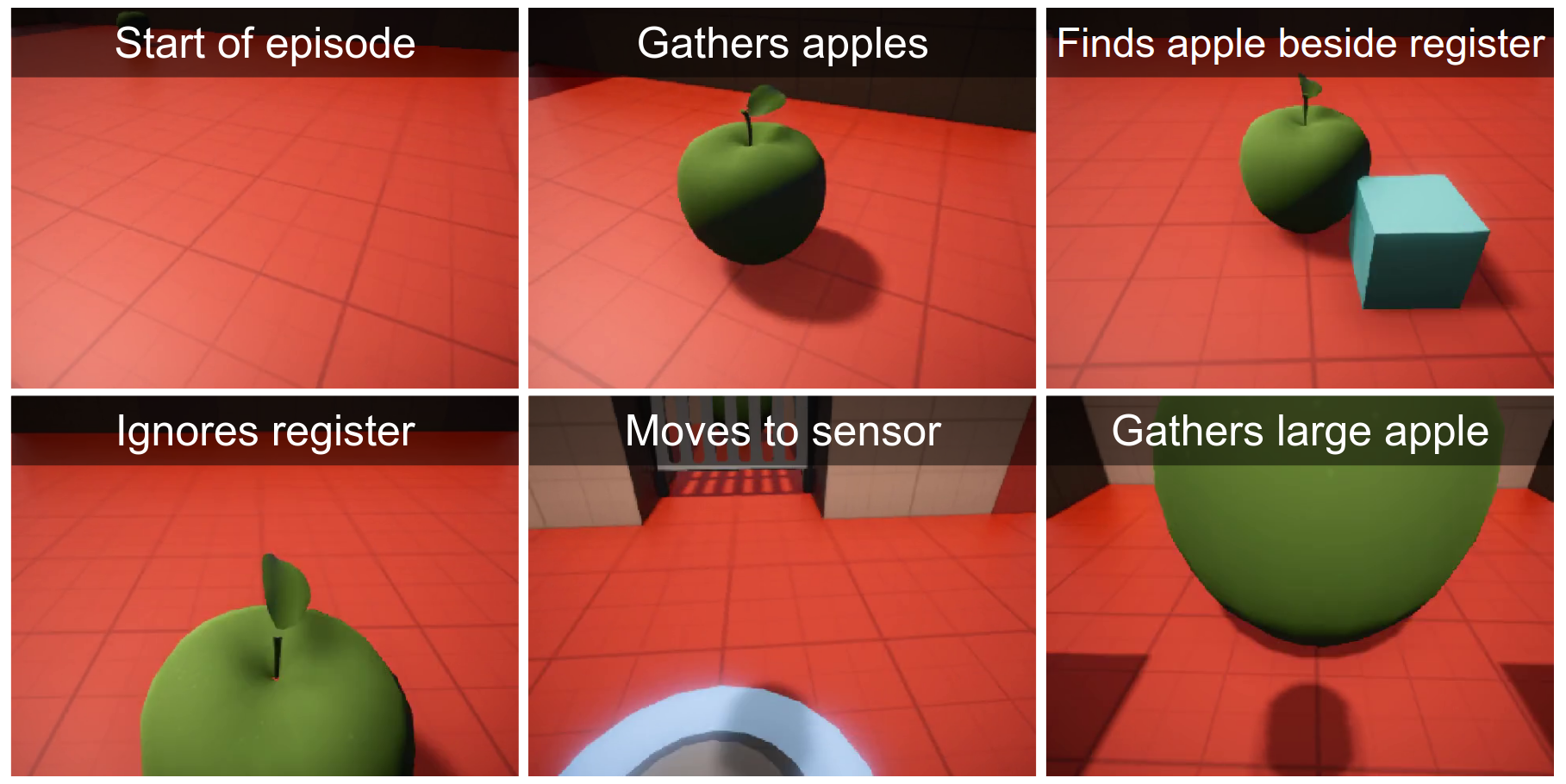}
\caption{DA-QL Agent}
\label{subfig:da}
\end{subfigure}
\caption{\textbf{Qualitative agent behaviors.}
The \Approvalrl{} RL agent mostly gathers apples as intended, but tampers opportunistically when available actions can cause immediate corruptions to the feedback register, such as by spinning and carrying the register.
This typically occurs when the agent's typical apple-gathering trajectory brings it close to a register.
In contrast, the DA-QL agent ignores the presence of registers and focuses on the intended task, despite also having opportunities to take actions causing feedback corruption.
Videos for all our agents are available in our video supplement at \href{\videolink}{\videolink}.
}
\label{fig:agent_sequences}
\end{figure}

\subsection{Robustness Checks}

All reported results are the average of 2 seeds.
We measure statistics using a fixed model at convergence, with averages taken over 5000 episodes.
Overall, we observe fairly little variation across results, since we wait until convergence before evaluating agents.
Overall, the deviation in total return across seeds is $< 1$ for all agents, except \Approvalrl{} QL on Seek-Avoid.
For tampering, variation across seeds is less than $3\%$, except for the standard RL agents, for which we see variation of $20\%$ across 3 seeds.
We believe this variation arises because the scale of returns for Standard RL is considerably larger than standard, for example by roughly a factor of 300 relative to our non-embedded setup, which causes instability in training.

\section{Related Work}
\label{sec:related_work}
We provide a high-level overview of related work, while deferring discussion of some more detailed comparisons to \S\ref{sec:discussion}.

\paragraph{The tampering problem}
A large literature discusses accurately modeling the tampering problem
~\citep{ring2011delusion, soares2015corrigibility, Bostrom2016Superintelligence, Armstrong2017Counterfactual, everitt2018thesis, Demski2019embedded}, as well as the closely related \emph{reward gaming} problem \citep{Amodei2016concrete,leike2018scalable,milli2020optimizing}.
We contextualize our work within the CFMDP formalism and REALab platform developed by \citet{kumar2020realab}, to which we also refer readers for a broader survey on the general tampering problem.

\paragraph{Approval and decoupling}
Value advice algorithms such as TAMER \citep{knox2008tamer} and COACH \citep{celemin2015coach} first demonstrated the practical feasibility of using approval as feedback.
Subsequent deep RL variations \citep{warnell2018deep,arumugam2019deep} building on these algorithms demonstrated impressive performance with low sample complexity.
This paper builds on approval-direction \citep{Christiano2014approval,Christiano2015Approval,Christiano2016Approval} in two ways: by deriving precise claims about the incentives of algorithms formalized within CFMDPs, and by investigating how these effects are reflected empirically.

Counterfactual Oracles \citep{Armstrong2017Counterfactual} use a similar decoupling approach to design question-answering systems where any system output observed by a human supervisor is never used as training feedback, and vice versa.
Decoupling is also a key idea in Current Reward Function Optimization (CRFO \citep{everitt2018thesis, dewey2011learning}).
CRFO uses observational data to fit both a transition and reward function, and then optimizes the policy solely against these learned models.
The idea is that policy rollouts within the simulated transition model will not be able to directly influence the world outside this simulation, and thus cannot directly tamper with the learned reward model.

\paragraph{Other solutions to the tampering problem}
Several approaches to the tampering problem involve inferring (a distribution over) the latent underlying reward function from observed data.
For example, (Cooperative) Inverse Reinforcement Learning \citep{ng2000algorithms,abbeel2004apprenticeship,HadfieldMenell2016CIRL} involves inferring the latent reward parameter from demonstrations, typically under an assumption of approximately-optimal demonstrations.
Inverse Reward Design \citep{hadfield2017inverse} similarly uses the observed rewards as evidence about the true latent reward function.
Importantly, these algorithms provide agents with uncertainty over designer preferences, which encourages them to maintain reliable feedback mechanisms and learn more about designer preferences.
An advantage of decoupled approval is that it can be straightforwardly integrated with standard deep RL techniques, without requiring accurate uncertainty estimates in the learned posterior, which is a key difficulty and active area of research for Bayesian neural networks \citep{graves2011practical,yao2019quality}.
In general, our work is the the first to our knowledge which empirically studies tampering incentives in the context of neural networks.

Another class of approaches involve explicitly identifying tampered feedback to avoid training on such data \citep{Everitt2017CRMDP,Mancuso2019Spiky,leike2018scalable}.
These either rely on costly checking mechanisms, or preserve tampering incentives if there are any loopholes in the tampering detector.
An advantage is that these approaches can actively discourage tampering, rather than merely being tampering-agnostic, and can thus complement other solutions.
We discuss detecting tampering further in \S\ref{subsec:other_approaches}.

Turning to our analysis, our work is the first to our knowledge which analyzes tampering incentives while treating the learning algorithm as a stochastic process.
The majority of prior work on tampering incentives focuses on utility functions: how will agents behave, assuming they select actions according to the argmax over some utility function \citep{soares2015corrigibility,Armstrong2017Counterfactual,Everitt2019tampering}?
Causal Influence Diagrams (CIDs) express learning algorithms using causal graphs \citep{Everitt2019tampering,everitt2019modeling,carey2020incentives}, enabling use of graphical algorithms to determine the presence or absence of tampering incentives.
We discuss this point further in \S\ref{subsec:disc_analysis_approaches}.

\section{Discussion}
\label{sec:discussion}

\subsection{Analysis of Algorithms vs.\ Analysis of Objectives}
\label{subsec:disc_analysis_approaches}
Much prior work assumes the agent's utility function is known for analysis.
This raises an important question we address: given a mechanistic description of a learning algorithm, such as code or pseudocode, how can we determine which function this algorithm optimizes (if any)?
Determining a utility function can be subtle, as we illustrate with the DA-QL example in Appendix \ref{app:importance-sampling}.
The key point is that although DA-QL with and without the importance sampling correction are trained with the same feedback function, they converge to different solutions.
Thus, determining the utility function implied by an algorithm must necessarily look beyond expected returns alone.

Further, many deep RL systems contain algorithmic ideas which can complicate determining a utility function -- for example, varying loss reweightings \citep{schaul2015prioritized}, meta-learning or optimizers at multiple timescales optimizing different metrics \citep{finn2017model,jaderberg2017population}, multiple loss functions \citep{ho2016generative}, and auxiliary tasks \citep{jaderberg2016reinforcement}.
\citet{krueger2019misleading} discuss a similar question specifically in the context of multi-scale optimizations, such as in \citet{jaderberg2017population}.

These factors motivate our decision to treat learning algorithms as stochastic processes -- without assuming a utility function -- and to then derive convergence and update incentives from this process directly.
We note that it may be desirable to go further still and analyze models themselves produced by algorithms \citep{hubinger2019risks}, though the feasibility of such analysis remains an important open problem \citep{olah2018building,olah2020zoom}.
Overall, we believe high-level and low-level analysis techniques are complementary -- by abstracting away details, high-level approaches can suggest useful algorithmic design principles,
while low-level analyses can validate the assumptions high-level approaches require.

\subsection{Practical Considerations}
\label{subsec:practical_consid}

\paragraph{Sample complexity}
A limitation of our experiments is their large sample complexities, which means our efficiency comparisons between algorithms may not hold within many regimes of practical interest.
This is a common challenge for deep RL \citep{henderson2018deep}, but is particularly critical for approval-based algorithms due to their reliance on human feedback, which is typically costly.
Semi-supervised RL \citep{finn2016generalizing} offers a promising solution to leveraging unsupervised experience in learning, and seems particularly relevant for approval-based approaches.
Indeed, recent approval-based algorithms such as \citet{warnell2018deep} and \citet{arumugam2019deep} leverage unsupervised experience through an auxiliary reconstruction loss and demonstrate impressive sample complexity by learning Atari Bowling and simple 3D Minecraft levels with less than 15 minutes of human feedback.
These results suggest the possibility for approval-based techniques to learn significantly faster than purely reward-driven techniques, due to stronger supervision.

\paragraph{Designing useful approval signals}
While not addressed in this work, choices in agent design may have a large impact on the ability to elicit useful approval feedback from humans.
For instance, in training a robotic system, supervising low-level actuator commands would be less natural than supervising actions specifying goal locations \citep{nachum2018data}.
Approval-based algorithms may benefit significantly from improvements in structured action spaces \citep{jiang2019language,barto2003recent} and preference elicitation methods \citep{gajos2005preference}.
However, approval feedback may be harder to provide when decoupled, since the supervisor needs to model unobserved state transitions in order to give feedback on queried actions that are not taken.
An important question is how improvements in ML can be effectively leveraged to design better approval signals.
Amplification \citep{christiano2018supervising} proposes using the learned agent itself to provide assistance to the supervisor.
In debate \citep{irving2018ai}, the task itself is structured to minimize the difficulty for the supervisor.

\paragraph{Allocation of errors}

In general, agents will make some errors, even with perfect feedback.
Thus, we would like the objective to prioritize the errors with the greatest impact on the policy's expected return, $Q^\pi$.
An advantage of reward-based policy gradient methods is that policies are optimized directly for $Q^\pi$.
Behavioral cloning fares poorly in this regard, since the log-loss objective does not differentiate between important actions with high influence on $Q^\pi$ and unimportant actions.
Relative to behavioral cloning, decoupled approval allows the supervisor to vary the scale of approval across different steps.
For example, if supervisors provide near-constant approval when evaluating unimportant actions, the objective will prioritize important actions.

We note that the approval signal in our experiments approximates $Q^*$, as in the value advice setting.
However, since actions should be penalized based on $Q^\pi$ rather than $Q^*$, %
the approval signal should ideally approximate $Q^{\pi_t}$ instead.
This presents practical difficulties, as it would require the approval signal adapt over time to $\pi_t$, and may be an unnatural form of supervision for humans to provide.
We leave further empirical and theoretical investigation into the importance of this deviation to future work.

\subsection{Assumptions}

\paragraph{Accurate approval feedback}
This work assumes access to accurate approval feedback.
The question of how to implement accurate feedback (\S\ref{subsec:practical_consid}) is important for future work.

\paragraph{\UniformCorruptions}
We believe that the assumption of \uniformCorruptions{}, as stated, is too strong to hold in many real-world settings.
This assumption is violated when there are \emph{self-aggrandizing actions}, which can increase observed feedback specifically for queries about the same action.
It may be possible to achieve guarantees even under significantly weaker assumptions by demonstrating that aligned local update incentives are preserved with high probability, so long as the probability of selecting self-aggrandizing actions is sufficiently small at each step, similar to Propositions \ref{obs:adpg_unbiased} and \ref{prop:adql_local_incentives}.
We suspect such considerations may allow decoupled approval to avoid tampering even in settings where the assumption of \uniformCorruptions{} does not hold (including the REALab environments we study), but leave more rigorous investigation to future work.

\paragraph{Generalization}
An important question is whether agents can still have tampering incentives, even in the presence of well-designed checks which can prevent tampering during training.
Answering this requires understanding the role of generalization --
a natural starting point would be to analyze the behavior of decoupled approval beyond the tabular setting, such as when used with linear function approximation.

\subsection{Other Approaches to Avoiding Tampering}
\label{sec:myopic_sufficiency}
\label{subsec:other_approaches}

\paragraph{Myopic optimization}
\Approvalrl{} RL agents are incentivized to tamper in states where tampering actions are available which can produce large positive corruptions.
However, due to myopic optimization, \approvalrl{} RL agents are not incentivized to visit such a state, except insofar as there is high approval along paths towards that state.
Therefore, as long as all opportunities to tamper are far from trajectories that are followed by the optimal policy, myopic optimization may be sufficient, without decoupling, to avoid learning to tamper.
The question of whether or not tampering opportunities are available along optimal trajectories will depend on task-specific details.

We believe that applying \approvalrl{} RL in the real world would likely involve optimization over ``high-level'' actions.
For example, we might consider cases such as high-level actions in a hierarchical RL setup \citep{sutton1999between,barto2003recent}, or an action space of instructions issued to a lower-level controller that has been optimized non-myopically in a narrow domain \citep{reed2015neural, merel2018neural}.
Such approaches seem particularly advantageous for approval-based methods, since they both make providing approval feedback more natural, and provide the agent greater flexibility, when compared to low-level action spaces.
These approaches also motivate decoupling: in the space of high-level actions, it may be possible for a single action in an otherwise near-optimal trajectory to produce tampering, by initiating an extended sequence of low-level actions.

\paragraph{Detecting tampering}
Detecting and penalizing tampering is a natural approach for handling the potential for corrupt feedback: run RL as normal, while ensuring that every training-time instance of detected tampering is associated with negative reward.
Indeed, for a tampering detector with no false negatives, tampering detection should be sufficient to avoid tampering.
However, our experiments illustrate a central difficulty --
if the agent finds any way to bypass the detector, it will be incentivized to exploit this loophole.
We believe tampering detection schemes are best combined with algorithms that induce aligned incentives: the penalty discourages tampering, while incentives ensure that undetected tampering actions are at least not incentivized.

Finally, we note that for real-world applications, a simple option is to use a tampering detector to decide whether to deploy the system.
Thus, as long as it is possible to detect non-zero tampering levels, as is the case here, the algorithm designers can decide not to deploy any models that tamper.
A more challenging case is if tampering can be detected early during training, but the agent eventually learns a tampering policy that can evade detection.
From the agent designers' perspective, it may appear that tampering detection has removed tampering behaviors, when in fact, the agent has learned a policy to bypass the detector.

\paragraph{Imitation learning}
An important question is when imitation-based or approval-based algorithms are preferable. %
Behavioral cloning can provide a lower variance training signal than policy gradient estimators, and also aid in hard exploration problems \citep{paine2019making,vinyals2019grandmaster}.
On the other hand, approval-based objectives can prioritize important actions, as discussed in \S\ref{subsec:practical_consid}.
Ultimately, we believe both approaches can yield good results, and
 note the two objectives can also be combined.

An important motivation for this work's focus on approval-based algorithms is pedagogical: we show that agents can avoid tampering incentives, even while maximizing a reward function in a corrupt feedback setting.
However, we hope future work will also analyze tampering incentives for imitation algorithms.
A particularly interesting setting is when feedback is gathered online, and the agent's actions can influence demonstrator behavior.

\section{Conclusion}

In this paper, we showed both theoretically and empirically that standard RL algorithms have incentives to tamper with the user's feedback.
This tampering incentive remains even for algorithms using approval feedback.
However, with a simple change, decoupled approval algorithms avoid tampering incentives, both in practice, and under the additive query-independent corruptions assumption.
Decoupled approval is naturally compatible with standard deep RL techniques, and when applied to a 3D environment, demonstrates strong performance without tampering.

\paragraph{Limitations and future work}
Our results raise many further questions, which we discuss in \S\ref{sec:discussion}.
On the theory side, we use the strong assumption of \uniformCorruptions, and restrict attention to the tabular setting.
It would be good to explore weaker assumptions on the corruption function and the effects of generalisation by the learner on our incentives analysis.
On the experimental side, our algorithms are data inefficient, and improving this is especially important given the costly nature of human feedback.
Finally, there is scope for exploring different forms of feedback, requiring less of the feedback provider while still allowing algorithms with aligned tampering incentives.

\section*{Acknowledgements}
We would like to especially thank Paul Christiano, Evan Hubinger, and Rohin Shah for extensive discussions in developing and refining these ideas.
We are also grateful to Orlagh Burns, Valentin Dalibard, Sarah Ellis, Adam Gleave, Zac Kenton, Pushmeet Kohli, Eric Langlois, Jan Leike, Vladimir Mikulik, Chongli Qin, Matthew Rahtz, and Alex Zhu
for many helpful discussions throughout the course of this work.

\bibliographystyle{plainnat}
\bibliography{refs}
\newpage
\appendix
\setcounter{theorem}{0}
\renewcommand{\thetheorem}{\Alph{section}\arabic{theorem}}
\setcounter{figure}{0}
\renewcommand\thefigure{\thesection.\arabic{figure}}
\section{Proofs for Non-decoupled \Approvalrl{} RL}\label{app:myopic-tampering}

\begin{proposition}[More details for Remark \ref{remark:myopic}]
Standard \approvalrl{} RL algorithms optimize a corrupted signal.
\end{proposition}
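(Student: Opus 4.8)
The plan is to make the informal computation in Remark~\ref{remark:myopic} rigorous by (i) identifying the effective reward that standard approval RL optimizes, (ii) characterizing its optimal myopic policy, and (iii) showing this target genuinely differs from, and can be arbitrarily worse than, the approval-optimal policy.

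First I would fix a concrete instantiation of standard approval RL (for definiteness, either approval policy gradients or approval Q-learning, in both cases myopic with $\gamma=0$), observing that what is optimized at each step is the observed feedback $\tilde{D}_{t+1}$ treated as an instantaneous reward. Since standard RL always queries its own action ($K_t = A_t$), the observed feedback is $\tilde{D}_{t+1} = c(S_{t+1}, A_t, \delta(S_t, A_t))$. Applying the \uniformCorruptions{} assumption of Definition~\ref{def:uniform_corruption} gives $\tilde{D}_{t+1} = \delta(S_t, A_t) + c_{S_{t+1}}$, so the effective reward function is $\tilde{r}(s,a) = \delta(s,a) + \ev_{s'\sim f(s,a)}\br{c_{s'}}$. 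I would then invoke the appropriate convergence result for the chosen algorithm: for policy gradients the expected update equals the policy gradient for the MDP with reward $\tilde{r}$, so stationary points are critical points of $\ev_\pi\br{\tilde{r}(s,a)}$ (as in Theorem~3 of \citet{sutton2000policy}); for Q-learning the values converge to $\tilde{r}$. Either way the greedy policy is $\pi(s) = \argmax_a \delta(s,a) + \ev_{s'\sim f(s,a)}\br{c_{s'}}$, recovering the expression claimed in the remark.

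To confirm that this target is genuinely \emph{corrupted} rather than a harmless reparameterization, I would exhibit a minimal CFMDP in which the action-dependent corruption term reverses the preference order. Taking two actions with $\delta(s,a_1) > \delta(s,a_2)$ but arranging that $a_2$ transitions to states with a sufficiently large offset $c_{s'}$, the maximizer of $\tilde{r}(s,\cdot)$ selects the approval-suboptimal action $a_2$; by choosing the offsets to dominate the approval gaps in every state, the learned policy is driven to select the approval-minimizing action throughout, so its approval regret equals the full range $\max_a \delta(s,a) - \min_a \delta(s,a)$.

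The main obstacle is modest and largely bookkeeping: I must state the convergence claim at the right level of generality so that it simultaneously covers the policy-gradient and Q-learning instantiations, leaning on the cited convergence results rather than reproving them, while keeping the corruption-reverses-the-argmax construction transparent enough that the contrast with the approval-optimal policy $\pi^*(s) = \argmax_a \delta(s,a)$ is immediate.
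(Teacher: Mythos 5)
Your proposal is correct and follows essentially the same route as the paper: identify the effective reward $\delta(s,a) + \ev_{s'\sim f(s,a)}\br{c_{s'}}$ under \uniformCorruptions{}, then place a large offset $L$ on a state reached with higher probability by a suboptimal action so that the corrupted argmax flips away from $a^*(s) = \argmax_a\delta(s,a)$ once $L$ exceeds the approval gap divided by the probability difference. The extra scaffolding you add (explicitly invoking the policy-gradient and Q-learning convergence results, and quantifying the worst-case approval regret) is sound but not needed for the paper's version of the argument.
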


\begin{proof}
Let $a^*(s) = \arg\max_a \delta(s,a)$ be the optimal action for the feedback function in state $s$. Assume there is a state $s$ where some action $a'$ has a different next state distribution than the optimal action $a^*(s)$.
Choose a state $s'$ such that $D = f(s' \mid s,a') - f(s'\mid s,a^*(s)) > 0$. Such a state must exist by assumption that $a'$ and $a^*(s)$ have different next state distributions. Assign $c_{s'} = L$ where $L$ is a large number, and $c_x = 0$ for $x \not= s'$.
Then
\begin{align*}
Q(s,a') - Q(s,a^*(s))
&= [ f(s'\mid s,a') L + \delta(s,a') ] - [ f(s'\mid s,a^*(s)) L + \delta(s,a^*(s)) ] \\
&= D L - [ \delta(s,a^*(s)) - \delta(s,a') ].
\end{align*}
This is positive if $L > [ \delta(s,a^*(s)) - \delta(s,a') ] D^{-1}$. Thus, for large enough $L$, $a^*(s)$ is not the optimal action in state $s$ for the corrupted feedback function.
\end{proof}

\section{Proofs for DA-PG}\label{app:adpg}

\if0
\begin{proposition}[Proposition \ref{obs:adpg_unbiased}]
  In any state $s$, for any policy $\pi_\theta$, the expected update provided by DA-PG in a CFMDP with \uniformCorruptions{} is equivalent to the expected update provided by approval-based policy gradients in the corresponding uncorrupted MDP.
That is,
$$\ev_{\mu, \pi_\theta}\br{c(s', k, \delta(s, k)) \nabla_\theta \log \pisk}
= \ev_{\bar{\mu}, \pi_\theta}\br{\delta(s, a) \nabla_\theta \log \pisa}$$
\end{proposition}

\begin{proof}
Assuming \uniformCorruptions, we have
\begin{align*}
&\; \ev_{\mu, \pi_\theta}\br{c(s', k, \delta(s, k)) \nabla_\theta \log \pisk} \\
=&\; \ev_{\mu, \pi_\theta}\brackets{(\delta(s, k) + c_{s'}) \nabla_\theta \log \pisk} \\
=&\; \ev_{\mu, \pi_\theta}\brackets{\delta(s, k) \nabla_\theta \log \pisk} +
       \ev_{\mu, \pi_\theta}\brackets{c_{s'} \nabla_\theta  \log \pisk} \\
=&\; \ev_{\bar{\mu}, \pi_\theta}\brackets{\delta(s, a) \nabla_\theta \log \pisa} +
       \ev_{\mu, \pi_\theta}\brackets{c_{s'}} \ev_{\mu, \pi_\theta}\brackets{\nabla_\theta \log \pisk} \\
=&\; \ev_{\bar{\mu}, \pi_\theta}\brackets{\delta(s, a) \nabla_\theta \log \pisa}  + \ev_{\mu, \pi_\theta}\brackets{c_{s'}} \cdot 0 \\
=&\; \ev_{\bar{\mu}, \pi_\theta}\brackets{\delta(s, a) \nabla_\theta \log \pisa}
\end{align*}
where the third equality uses the fact that $a$ and $k$ are sampled independently, and the fourth equality holds because the expectation of the score function is 0.
\end{proof}
\fi

\begin{corollary}[Corollary \ref{cor:da-pg}]
Consider policies parameterized by a mixture parameter $z$ over two experts $\pi_1$ and $\pi_2$: $\pi_z(a\mid s) = \sigma(z)\pi_1(a\mid s) + (1 - \sigma(z))\pi_2(a\mid s)$, where $\sigma(z) = (1+\exp(-x))^{-1}$ is the sigmoid function.
DA-PG applied to this policy parameterization induces aligned update incentives.
That is,
  $\overline{R}(\pi_{\overline{z_{t+1}}}) \geq \overline{R}(\pi_{z_t})$, where $\overline{R}(\pi) = \ev_{a\sim\pi(\cdot\mid s)}\br{\delta(s, a)}$ and $\overline{z_{t+1}} = \ev\br{z_{t+1}\mid z_t, s}$.
\end{corollary}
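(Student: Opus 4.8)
The plan is to first invoke Proposition~\ref{obs:adpg_unbiased} to replace the corrupted DA-PG update with the equivalent uncorrupted policy-gradient update, and then exploit the one-dimensional mixture structure of this parameterization. Writing the DA-PG update on the scalar parameter as $z_{t+1} = z_t + \alpha\, \tilde d\, \nabla_z \log \pi_{z_t}(k\mid s)$ and taking the conditional expectation, Proposition~\ref{obs:adpg_unbiased} gives
\begin{equation*}
\overline{z_{t+1}} = z_t + \alpha\, \ev_{\bar{\mu}, \pi_{z_t}}\br{\delta(s,a)\, \nabla_z \log \pi_{z_t}(a\mid s)}.
\end{equation*}
By the score-function identity, the expectation on the right is exactly $\nabla_z \overline{R}(\pi_{z_t})$, so the expected parameter performs a gradient-ascent step $\overline{z_{t+1}} = z_t + \alpha\, \nabla_z \overline{R}(\pi_{z_t})$ on the true expected approval.

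Next I would compute $\overline{R}(\pi_z)$ explicitly using linearity of the mixture. Since $\pi_z = \sigma(z)\pi_1 + (1-\sigma(z))\pi_2$, writing $R_i = \ev_{a\sim\pi_i(\cdot\mid s)}\br{\delta(s,a)}$ gives
\begin{equation*}
\overline{R}(\pi_z) = \sigma(z)\, R_1 + (1 - \sigma(z))\, R_2 = R_2 + \sigma(z)\,(R_1 - R_2),
\end{equation*}
which is affine in $\sigma(z)$ and therefore \emph{monotonic} in $z$. Differentiating, $\nabla_z \overline{R}(\pi_z) = \sigma'(z)\,(R_1 - R_2)$, and since $\sigma'(z) > 0$ the gradient has the same sign as $R_1 - R_2$ for every $z$.

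Finally I would combine these facts. The expected step moves $z$ in the direction of $\operatorname{sign}(R_1 - R_2)$: upward when $R_1 > R_2$, downward when $R_1 < R_2$, and not at all when $R_1 = R_2$. Because $\overline{R}$ is increasing in $z$ precisely when $R_1 > R_2$ and decreasing precisely when $R_1 < R_2$, moving $z$ in this direction can only increase $\overline{R}$, which yields $\overline{R}(\pi_{\overline{z_{t+1}}}) \geq \overline{R}(\pi_{z_t})$ in all cases.

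The point to get right -- and what distinguishes this from a generic gradient-ascent argument -- is that the conclusion holds for \emph{any} step size $\alpha > 0$, not merely sufficiently small ones. This is exactly because $\overline{R}$ is monotonic in the single parameter $z$: once the expected update is shown to move $z$ in the ascent direction, monotonicity delivers improvement with no smoothness or small-step condition. The only subtlety is that we evaluate the objective at the \emph{expected} next parameter $\overline{z_{t+1}}$ rather than taking the expectation of the objective; since the argument is phrased throughout in terms of $\overline{z_{t+1}}$, this causes no difficulty once the expected update has been identified as an exact gradient step via Proposition~\ref{obs:adpg_unbiased}.
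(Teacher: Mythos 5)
Your proposal is correct and follows essentially the same route as the paper's proof: both invoke Proposition~\ref{obs:adpg_unbiased} to reduce to the uncorrupted policy gradient, compute that the expected update equals $\alpha\,\sigma'(z_t)\bigl[\overline{R}(\pi_1)-\overline{R}(\pi_2)\bigr]$, and conclude via the monotonicity of $\overline{R}(\pi_z)$ in $z$. Your explicit remark that monotonicity makes the conclusion hold for arbitrary step sizes (not just infinitesimal ones) is a point the paper leaves implicit, but it is the same argument.
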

\begin{proof}
Following the DA-PG update given in Algorithm \ref{alg:adpg} by $\theta_{t+1} = \theta_t + \alpha \tilde d \nabla_\theta \log \pisk$, the expected update for $z$ is
  $$\overline{z_{t+1}} = \ev\br{z_{t+1}\mid{z_t},s}=z_t+\alpha_t\underbrace{\ev\left[c(s',k,\delta(s,k))\frac{d}{dz_t}\log\pi_{z_t}(k\mid s)\right]}_g.$$

Using Proposition \ref{obs:adpg_unbiased}, and noting that $a$ and $k$ are sampled independently from the same policy, we analyze the update directly:
\begin{align*}
g =&\; \ev_{k\sim\pi_{z_t}(\cdot\mid s)}\left[c(s',k,\delta(s,k))\frac{d}{dz_t}\log\pi_{z_t}(k\mid s)\right]\\
  =&\; \ev_{a\sim\pi_{z_t}(\cdot\mid s)}\left[\delta(s,a)\frac{d}{dz_t}\log\pi_{z_t}(a\mid s)\right] \quad\quad \text{[apply Proposition \ref{obs:adpg_unbiased}]}\\
  =&\; \ev_{k\sim\pi_{z_t}(\cdot\mid s)}\left[\delta(s,k)\frac{d}{dz_t}\log\pi_{z_t}(k\mid s)\right] \quad\quad \text{[$a$ and $k$ are from the same policy]}\\
  =&\; \sum_{k\in\mcK} \delta(s, k) \frac{d}{dz_t}\pi_{z_t}(k\mid s) \\
  =&\; \frac{d}{dz_t} \sum_{k\in\mcK} \delta(s, k) \left[ \sigma(z_t)\pi_1(k\mid s) + (1 - \sigma(z_t))\pi_2(k\mid s) \right] \\
  =&\; \frac{d}{dz_t} \left[ \sigma(z_t) \sum_{k\in\mcK} \delta(s, k) \pi_1(k\mid s) + (1 - \sigma(z_t))\sum_{k\in\mcK} \delta(s, k) \pi_2(k\mid s) \right] \\
  =&\; \frac{d}{dz_t} \left[ \sigma(z_t) \overline{R}(\pi_1) + (1 - \sigma(z_t)) \overline{R}(\pi_2) \right] \\
  =&\; \sigma'(z_t) \left[ \overline{R}(\pi_1) - \overline{R}(\pi_2)\right]
\end{align*}
  Note that $\sigma'(x) = \exp(-x) / (\exp(-x) + 1)^2$ is always positive and $\alpha_t > 0$. Thus, when $\overline{R}(\pi_1) \geq \overline{R}(\pi_2)$, we have $g \geq 0$ and so $\overline{z_{t+1}} \geq z_t$ and $\overline{R}(\pi_{\overline{z_{t+1}}}) \geq \overline{R}(\pi_{z_t})$.
  Similarly when $\overline{R}(\pi_1) < \overline{R}(\pi_2)$, we have $\overline{z_{t+1}} < z_t$ and hence $\overline{R}(\pi_{z_t}) < \overline{R}(\pi_{\overline{z_{t+1}}})$.
\end{proof}

\section{Proofs for DA-QL}\label{app:adql}

\subsection{Proof of Proposition \ref{prop:adql_local_incentives} (aligned local update incentives)}\label{app:adql-local}

\begin{proposition}[Proposition \ref{prop:adql_local_incentives}]
In DA-QL under \uniformCorruptions, for all $s, k, t$, we have
$$\ev\br{\xi_t(k) \mid H_t} =   h_1 (\delta(s,k) - Q_t(s,k)) + h_2$$
where $\xi_t(k) = Q_{t+1}(s,k) - Q_t(s,k)$, $H_t$ is the history up to time $t$ (including the current state $s$ but not $a$ or $k$), and $h_1, h_2$ are functions of $H_t$.
In particular, whenever $\delta(s, k_1) > \delta(s, k_2)$, at least one of the following holds:
\begin{equation*}
\begin{split}
\ev\br{\xi_t(k_1)\mid H_t} &> \ev\br{\xi_t(k_2)\mid H_t} \;\;\;\; or \\
Q_t(s, k_1) &> Q_t(s, k_2)
\end{split}
\end{equation*}
\end{proposition}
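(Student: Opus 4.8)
The plan is to track how one DA-QL update changes the single entry $Q_t(s,k)$ for a fixed query action $k$, and then average over the randomness of the step conditioned on $H_t$. The first thing to note is that the update in Algorithm~\ref{alg:adql} only modifies the Q-value of the query that is actually \emph{sampled}: if the drawn query $K_t$ equals $k$ then $\xi_t(k) = \alpha\,(\tilde d - Q_t(s,k))$ with $\alpha = \lrscale\,(M_t(s)\,\pi_K(k\mid s))^{-1}$, and otherwise $\xi_t(k)=0$. Conditioning on $H_t$ and peeling off the event $\{K_t=k\}$, whose probability is $\pi_K(k\mid s)$, gives
\[
\ev\br{\xi_t(k)\mid H_t} = \pi_K(k\mid s)\,\ev\br{\alpha\,(\tilde d - Q_t(s,k)) \mid K_t=k,\,H_t}.
\]

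The second step invokes the two structural assumptions. Under \uniformCorruptions{} I substitute $\tilde d = \delta(s,k) + c_{s'}$. Conditioned on $\{K_t=k\}$ and $H_t$, the learning rate $\alpha$ is deterministic, since it is a function of $M_t(s)$ and $\pi_K$, both measurable with respect to $H_t$; so it pulls out of the expectation. The only remaining randomness is in $s'$, which enters solely through the \emph{taken} action $a$. Because DA-QL draws $a$ and $k$ independently, $s'$ is independent of $k$ given $H_t$, hence $\ev\br{c_{s'}\mid K_t=k,\,H_t} = \ev\br{c_{s'}\mid H_t} =: \bar c$ does not depend on $k$. Substituting and cancelling the importance-sampling factor $\pi_K(k\mid s)$ against the $\pi_K(k\mid s)^{-1}$ inside $\alpha$ yields
\[
\ev\br{\xi_t(k)\mid H_t} = \frac{\lrscale}{M_t(s)}\bigl(\delta(s,k) + \bar c - Q_t(s,k)\bigr),
\]
so setting $h_1 = \lrscale/M_t(s)$ and $h_2 = (\lrscale/M_t(s))\,\bar c$ establishes the identity, with both $h_1,h_2$ functions of $H_t$ alone. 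Since $M_t(s)>0$ (required for $\epsilon=1/M_t(s)$ to be defined) and $\lrscale>0$, we have $h_1>0$; this positivity, together with $h_1,h_2$ being common to every query action, drives the dichotomy.

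For the dichotomy I would argue by contraposition. From the identity the $h_2$ terms cancel, giving $\ev\br{\xi_t(k_1)\mid H_t} - \ev\br{\xi_t(k_2)\mid H_t} = h_1\bigl[(\delta(s,k_1)-\delta(s,k_2)) - (Q_t(s,k_1)-Q_t(s,k_2))\bigr]$. Suppose both alternatives failed, i.e.\ $\ev\br{\xi_t(k_1)\mid H_t}\le \ev\br{\xi_t(k_2)\mid H_t}$ and $Q_t(s,k_1)\le Q_t(s,k_2)$. Dividing the displayed difference by $h_1>0$ forces $\delta(s,k_1)-\delta(s,k_2)\le Q_t(s,k_1)-Q_t(s,k_2)\le 0$, contradicting the hypothesis $\delta(s,k_1)>\delta(s,k_2)$. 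Hence at least one alternative must hold.

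The crux of the whole argument is the cancellation of $\pi_K(k\mid s)$, which is where both assumptions do real work: query-independent corruption is what lets $\bar c$ factor out as a single $k$-free constant, and the decoupling of $a$ from $k$ is what makes $\ev\br{c_{s'}\mid H_t}$ genuinely independent of the query. Everything else is a short computation; the care required lies entirely in checking these measurability and independence facts, so that $h_1$ and $h_2$ can legitimately be declared functions of $H_t$ shared across all $k$.
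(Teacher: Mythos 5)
Your proof is correct and follows essentially the same route as the paper's: both arguments reduce to the two key observations that the importance-sampling correction makes the expected learning rate equal to $\lrscale/M_t(s)$ independently of $k$, and that decoupling plus query-independent corruption makes $\ev[c_{s'}\mid H_t]$ a $k$-free constant, yielding $h_1=\lrscale/M_t(s)$ and $h_2=h_1\,\ev[c_{s'}\mid H_t]$ exactly as in the paper (where $h_1$ is written as $\beta_t(s)=\ev[\alpha_t(s,k)\mid H_t]$). Your version is slightly more explicit in peeling off the event $\{K_t=k\}$ and cancelling the importance-sampling factor, but the substance and the contrapositive dichotomy argument are identical.
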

\begin{proof}
Let $\beta_t(s) = \ev[\alpha_t(s,k) \mid H_t]$. Then we have
\begin{align*}
\ev\br{\xi_t(k) \mid H_t}
&= \ev[Q_{t+1}(s,k)\mid H_t] - Q_t(s,k) \\
&= \ev[(1-\alpha_t(s,k))Q_t(s,k)+\alpha_t(s,k)(\delta(s,k)+c_{s'}) \mid H_t] - Q_t(s,k) \\
&= (1- \beta_t(s)) Q_t(s,k) + \beta_t(s) (\delta(s,k)+\ev[c_{s'} \mid H_t]) - Q_t(s,k) \\
&= \beta_t(s) (\delta(s,k) - Q_t(s,k))  + \ev[c_{s'} \mid H_t])
\end{align*}
  where the third line relies on the fact that $\alpha_t(s,k)$ is independent of $c_{s'}$, given $H_t$, because the corruption is independent of the query action under \uniformCorruptions.
Thus, the required statement holds with $h_1 = \beta_t(s)>0$ and $h_2 = \beta_t(s) \ev[c_{s'}\mid H_t]$.
Then,
$$\ev[\xi_t(k_1)-\xi_t(k_2) \mid H_t] = h_1 ( (\delta(s,k_1) - \delta(s,k_2)) - ( Q_t(s,k_1) - Q_t(s,k_2)) )$$
which is positive if $Q_t(s, k_1) \leq Q_t(s, k_2)$ and $\delta(s,k_1) > \delta(s,k_2)$, as required.
\end{proof}

\subsection{Proof of Proposition \ref{thm:adql} (convergence incentives)}\label{app:adql-global}

\paragraph{Stochastic process convergence}
The following theorem is the basis for the convergence proof for Q-learning in MDPs~\citep{Jaakkola1993convergence}, and we will also use it for decoupled approval Q-learning.
\begin{theorem}\label{thm:sa-convergence}
Let $\Delta_{t}$, $\beta_t$ and $F_t$, for integers $t\geq{0}$, be sequences of random vectors each with domain $\mathbb{R}^{|\mathcal{X}|}$, where $\mathcal{X}$ is some finite set.
We write $\Delta_t(x)$ for the $x$th component of $\Delta_t$, and similarly for $F$ and $\beta$.
Let $P_t$ be a sequence of increasing $\sigma$-fields such that $\Delta_0$ and $\beta_0$ are $P_0$-measurable, and $\Delta_{t+1}$, $\beta_{t+1}$, and $F_t$ are $P_{t+1}$-measurable for integers $t\geq{0}$.
Suppose the following conditions all hold:
\begin{enumerate}
\item $\Delta_{t+1}(x) = (1 - \beta_t(x)) \Delta_t(x) + \beta_t(x) F_t(x)$, for all $x\in\mathcal{X}, t\geq 0$.
\item For all $x\in\mathcal{X}$, $\sum_{t=0}^\infty \beta_t(x) = \infty$, and $\sum_{t=0}^\infty \beta_t^2(x) < \infty$, both almost surely.
\item $\exists{\gamma\in(0,1)}.\,\forall{t\geq 0}.\,\norm{\ev[F_t \mid P_t]}_\infty \leq \gamma \norm{\Delta_t}_\infty$ (surely).
\item $\exists{C\in\mathbb{R}}.\,\forall{t\geq 0}.\,\Var[F_t \mid P_t] \leq C(1 + \norm{\Delta_t}_\infty)^2$ (surely).
\end{enumerate}
Then $\Delta_t$ converges to $0$ almost surely.
\end{theorem}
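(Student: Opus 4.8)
The plan is to follow the classical stochastic approximation argument of \citet{Jaakkola1993convergence}, separating each update into a contracting mean part and a vanishing noise part. First I would write $F_t = \ev[F_t \mid P_t] + \eta_t$, where $\eta_t \defeq F_t - \ev[F_t \mid P_t]$ satisfies $\ev[\eta_t \mid P_t] = 0$ and, by Condition~4, has conditional variance at most $C(1 + \norm{\Delta_t}_\infty)^2$. Condition~3 controls the mean part through the max-norm contraction $\norm{\ev[F_t \mid P_t]}_\infty \le \gamma \norm{\Delta_t}_\infty$ with $\gamma < 1$: each coordinate is dragged toward $0$ by a factor $\gamma$, while the noise, though possibly large, enters with square-summable weights and ought to average out. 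Note that $\sum_t \beta_t^2(x) < \infty$ forces $\beta_t(x) \to 0$, so the recursion is eventually a genuine convex combination.

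Next I would isolate the noise by introducing the per-coordinate auxiliary process $W_{t+1}(x) = (1 - \beta_t(x)) W_t(x) + \beta_t(x)\eta_t(x)$ with $W_0 = 0$, so that the residual $U_t \defeq \Delta_t - W_t$ obeys $U_{t+1}(x) = (1 - \beta_t(x)) U_t(x) + \beta_t(x)\,\ev[F_t(x) \mid P_t]$ and $\Delta_t = U_t + W_t$ for all $t$ by induction. Using $\ev[\eta_t \mid P_t] = 0$, the Robbins--Monro conditions $\sum_t \beta_t(x) = \infty$ and $\sum_t \beta_t^2(x) < \infty$ of Condition~2, and the variance bound of Condition~4, a standard supermartingale / quadratic-variation argument shows $W_t(x) \to 0$ almost surely --- \emph{provided} $\norm{\Delta_t}_\infty$ is almost surely bounded, since the variance bound scales with it.

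With the noise controlled, I would finish by an iterated-shrinking induction on the max norm. Suppose $\limsup_t \norm{\Delta_t}_\infty \le B$ almost surely and fix any $\gamma' \in (\gamma, 1)$. For $t$ large enough that $\norm{\Delta_t}_\infty \le B + \epsilon$ and $\norm{W_t}_\infty \le \epsilon$, the recursion for each coordinate of $U_t$ is a convex combination of $U_t(x)$ and a target $\ev[F_t(x) \mid P_t]$ of magnitude at most $\gamma \norm{\Delta_t}_\infty \le \gamma(B + \epsilon)$; because $\sum_t \beta_t(x) = \infty$, iterating this convex combination pulls each $U_t(x)$ below $\gamma' B$ and holds it there. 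Adding back $W_t \to 0$ yields $\limsup_t \norm{\Delta_t}_\infty \le \gamma' B$, and iterating replaces $B$ by $(\gamma')^n B \to 0$, giving $\Delta_t \to 0$ almost surely.

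The main obstacle is the circular dependence between these two steps: the variance bound of Condition~4 scales with $\norm{\Delta_t}_\infty$, so proving $W_t \to 0$ presupposes that $\norm{\Delta_t}_\infty$ is bounded, whereas almost-sure boundedness of $\Delta_t$ is itself what the contraction-plus-noise control is meant to deliver. I would break this circularity as in \citet{Jaakkola1993convergence}, either by tracking a normalized process such as $\Delta_t / \max(1, \max_{s \le t}\norm{\Delta_s}_\infty)$ or by invoking a separate boundedness lemma, to first certify that $\norm{\Delta_t}_\infty$ is almost surely bounded; once that holds the conditional variance is uniformly controlled and the noise-plus-contraction argument above applies verbatim.
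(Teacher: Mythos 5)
The paper does not actually prove this theorem: it is imported as the stochastic-approximation convergence result of \citet{Jaakkola1993convergence} and used as a black box in the proof of Proposition \ref{thm:adql}, so there is no in-paper argument to compare against. Your sketch is a faithful reconstruction of that classical proof --- the martingale-difference decomposition $F_t = \ev[F_t\mid P_t] + \eta_t$, the auxiliary noise process $W_t$ driven to zero by the Robbins--Monro conditions, the iterated $\gamma$-shrinking of $\limsup_t\norm{\Delta_t}_\infty$, and the rescaling device to break the circularity between the variance bound of Condition~4 and boundedness of $\Delta_t$ --- and correctly flags that last point as the one genuine subtlety, so it matches the cited source rather than diverging from anything in the paper.
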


\paragraph{Decoupled Approval Q-Learning convergence}
For decoupled approval Q-learning with \uniformCorruptions, we can show that the difference in action value between two actions eventually equals the difference in the (uncorrupted) feedback data for those actions.
This implies that the greedy policy on the learned Q values at convergence maximizes the true feedback.

\begin{theorem}\label{thm:dmql-convergence}[Proposition \ref{thm:adql}]
Let $k, k' \in \mcK$ be two distinct query actions.
For DA-QL in CFMDPs with \uniformCorruptions, the following expression converges almost surely to 0 as $t\to\infty$:
\begin{align*}
\left(Q_t(s,k)-Q_t(s,k')\right)-
\left(\delta(s,k)-\delta(s,k')\right)
\end{align*}
\end{theorem}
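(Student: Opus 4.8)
The plan is to instantiate the stochastic-approximation convergence theorem (Theorem~\ref{thm:sa-convergence}) on the scalar error process $\Delta_t = (Q_t(s,k)-Q_t(s,k')) - (\delta(s,k)-\delta(s,k'))$ for the fixed pair $k,k'$, taking the index set $\mathcal{X}$ to be a singleton. It is convenient to also track the per-query residuals $Y_t(s,k)=Q_t(s,k)-\delta(s,k)$, so that $\Delta_t = Y_t(s,k)-Y_t(s,k')$ and the goal $\Delta_t\to 0$ says these residuals agree in the limit. To put the $Q$-update of Algorithm~\ref{alg:adql} into the required form, I would substitute the additive query-independent corruption $\tilde d = \delta(s,k)+c_{s'}$ to get $Y_{t+1}(s,k)=(1-\alpha)Y_t(s,k)+\alpha c_{s'}$ on a step where query $k$ is made, and $Y_{t+1}=Y_t$ otherwise. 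Writing out the one-step change of $\Delta_t$ in the three cases (query $=k$, query $=k'$, query $\notin\{k,k'\}$) and using the \emph{deterministic} effective rate $\beta_t=\lrscale/M_t(s)$ at steps in state $s$ (and $\beta_t=0$ elsewhere), I would fold the which-query randomness into a forcing term $F_t$ so that $\Delta_{t+1}=(1-\beta_t)\Delta_t+\beta_t F_t$ holds by construction (hypothesis~1).

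The heart of the argument is hypothesis~3. I would show $\ev[\Delta_{t+1}\mid P_t]=(1-\beta_t)\Delta_t$, so that $\ev[F_t\mid P_t]=0$ and the contraction bound $\norm{\ev[F_t\mid P_t]}_\infty\le\gamma\norm{\Delta_t}_\infty$ holds trivially for any $\gamma\in(0,1)$. This cancellation rests on two ingredients acting together: (i) \emph{decoupling}, which makes $s'\sim f(s,a)$ depend only on the independently sampled taken action $a$, so that $\ev[c_{s'}\mid \text{query}=k,P_t]=\bar c_{s,t}$ is the \emph{same} common offset for every query $k$; and (ii) the \emph{importance-sampling correction}, which makes the expected step size $\pi_K(k\mid s)\,\alpha = \lrscale/M_t(s)=\beta_t$ identical across queries, so that in the expected update of the difference the common offset $\bar c_{s,t}$ enters with equal weight on the $k$ and $k'$ terms and cancels exactly. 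This is precisely the step that fails if the correction is omitted, in which case the limit picks up a query-dependent weighting of the common corruption.

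For the remaining hypotheses I would argue as follows. Hypothesis~2 (learning rates): since $s$ is visited infinitely often and the $\epsilon$-greedy floor $\epsilon=\max(1/M_t(s),\lrscale\size{A})$ keeps each query sampled with probability at least $\lrscale$, the counts $M_t(s)\to\infty$, and summing $\beta_t=\lrscale/M_t(s)$ over the visits to $s$ gives a divergent harmonic-type sum while $\sum\beta_t^2<\infty$. Hypothesis~4 (bounded variance): given $P_t$ the only randomness in $F_t$ comes from the query draw and from $c_{s'}$; the exploration floor bounds the inverse weight $1/\pi_K(k\mid s)\le 1/\lrscale$, while $\alpha\le 1$ together with bounded feedback and bounded corruption keeps the iterates $Q_t$ (hence $Y_t$ and $c_{s'}$) almost surely bounded, so $\Var[F_t\mid P_t]\le C(1+\norm{\Delta_t}_\infty)^2$ for a suitable constant $C$. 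Theorem~\ref{thm:sa-convergence} then yields $\Delta_t\to 0$ almost surely, as required.

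I expect the zero-mean identity (hypothesis~3) to be the main obstacle, both because it is exactly where the corruption assumption and the importance-sampling correction must be combined, and because it requires careful bookkeeping of the random per-query step sizes against the deterministic rate $\beta_t$. A secondary technical point is hypothesis~4: because $F_t$ contains the individual residual $Y_t(s,k')$ and not merely the difference $\Delta_t$, one cannot read boundedness off of $\Delta_t$ and must separately establish almost-sure boundedness of the iterates in order to control the variance.
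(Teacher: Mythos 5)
Your proposal is correct and follows essentially the same route as the paper's proof: the same residuals $Y_t(s,k)=Q_t(s,k)-\delta(s,k)$ and difference process $\Delta_t$, the same effective rate $\beta_t=\lrscale/M_t(s)$, the same exact cancellation $\ev[F_t\mid P_t]=0$ obtained by combining query-independence of $c_{s'}$ (via decoupling) with the importance-sampling equalization of expected step sizes, and the same boundedness argument for the variance condition. The only cosmetic difference is that you instantiate the stochastic-approximation theorem on a singleton index set per fixed $(s,k,k')$ rather than on $\mcS\times\mcK\times\mcK$, which is immaterial since the theorem's conclusion is component-wise.
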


\begin{proof}
We define the agent-environment interaction as a discrete-time stochastic process,
with sequences of random variables $S_t$, $K_t$, $A_t$, and $\tilde D_t$, given a behaviour policy $\pi_A$, a query policy $\pi_K$, and a data function $\delta:\mcS\times\mcK\to\mathbb{R}$, as follows.
\begin{align*}
S_0&\sim p(\cdot)\\
K_t&\sim\pi_K(\cdot\mid H_t)\\
A_t&\sim\pi_A(\cdot\mid H_t)\\
\tilde D_{t+1}&=c(S_{t+1}, K_t, \delta(S_t, K_t))\\
S_{t+1}&\sim f(\cdot\mid S_t, A_t)
\end{align*}
Here, the sequence of random variables $H_t$ represent the history up to time step $t$, defined by $H_t=S_0K_0A_0\tilde D_1\dots A_{t-1}\tilde D_tS_{t}$.

As given in Algorithm \ref{alg:adql}, let $s'$ be the next state, and let the learning rate $\alpha_t(s,k)$ be $\lrscale(M_t(s) \pi_K(k|s))^{-1}$ if $S_t=s,K_t=k$ and $0$ otherwise.
We note that the update rule for $Q_t$ can be written equivalently as
\begin{align*}
    Q_{t+1}(s,k)=(1-\alpha_t(s,k))Q_t(s,k)+\alpha_t(s,k)(\delta(s,k)+c_{s'})
\end{align*}
because $\tilde D_{t+1}=\delta(s,k)+c_{s'}$ whenever $\alpha_t(s,k)\neq0$, since the learning rate is zero unless $S_t = s, K_t=k$.
We use this update rule in what follows.

We will instantiate Theorem~\ref{thm:sa-convergence} so that its conclusion matches ours.
Take $\mathcal{X}=\mcS\times\mcK\times\mcK$, which is finite since the underlying sets are finite.
Define $Y_t(s,k)=Q_t(s,k)-\delta(s,k)$.
Now take the following instantiation:
\begin{align*}
\Delta_t(s,k,k')&=Y_t(s,k)-Y_t(s,k')\\
\beta_t(s,k,k')&= \ev[\alpha_t(s,k) \mid H_t]
\end{align*}
Let us abbreviate
$\alpha_t=\alpha_t(s,k)$, $\alpha'_t=\alpha_t(s,k')$, $\beta_t=\beta_t(s,k,k')$, $Y_t=Y_t(s,k)$, $Y'_t=Y_t(s,k')$, and $\Delta_t=\Delta_t(s,k,k')$.
Then we have
\begin{align*}
\Delta_{t+1} &=Y_{t+1}-Y_{t+1}'\\
&=Q_{t+1}(s,k)-\delta(s,k)-Q_{t+1}(s,k')+\delta(s,k')\\
&=(1-\alpha_t)Q_t(s,k)+\alpha_t(\delta(s,k)+c_{s'})-\delta(s,k)\\
&\quad-(1-\alpha_t')Q_t(s,k')-\alpha_t'(\delta(s,k')+c_{s'})+\delta(s,k')\\
&=Q_t(s,k)-Q_t(s,k')-\delta(s,k)+\delta(s,k')\\
&\quad+\alpha_t(\delta(s,k)+c_{s'}-Q_t(s,k))\\
&\quad+\alpha_t'(Q_t(s,k')-\delta(s,k')-c_{s'})\\
&=\Delta_t+(\alpha_t- \alpha'_t)c_{s'} -\alpha_t Y_t+\alpha_t'Y_t'
\end{align*}
We use this quantity to define
$$F_t = F_t(s,k,k') = \beta_t^{-1} ((\alpha_t- \alpha'_t)c_{s'} -\alpha_t Y_t+\alpha_t'Y_t') + (Y_t - Y'_t)$$
if $S_t=s$ and $0$ otherwise.

We take $P_t$ to be the $\sigma$-field generated by $H_t$.
Observe that $Q_t$ and $\beta_t$ are measurable given $H_t$.
Thus $\Delta_t$ and $\beta_t$ are measurable given $P_t$ and $F_t$ is measurable given $P_{t+1}$.

If we can show $\Delta_t$ converges a.\,s.\ to $0$, we are done, so it suffices to prove the conditions of Theorem~\ref{thm:sa-convergence}.

\textbf{Condition 1:}
When $S_t=s$, the statement holds since $F_t=\beta_t^{-1}(\Delta_{t+1}-\Delta_t) + \Delta_t$. If $s\neq{S_t}$, we have $\alpha_t=0$ and so $\beta_t=0$, hence $\Delta_{t+1}=(1-\beta_t)\Delta_t+\beta_t F_t$ as required.

\textbf{Condition 2:}
By Lemma \ref{lem:expectation-learning-rate-epsilon}, we have $\beta_t= \ev[\alpha_t \mid H_t] = \lrscale M_t(s)^{-1}$ when $S_t = s$ and $0$ otherwise. Then we have
$$\sum_{t=0}^\infty \beta_t = \sum_{t=0}^\infty \mathbb{I}_{S_t=s}\frac{\lrscale}{M_t(s)} = \lrscale \sum_{k=0}^\infty \frac{1}{k} = \infty$$
$$\sum_{t=0}^\infty \beta_t^2 = \lrscale^2 \sum_{k=0}^\infty \frac{1}{k^2} < \infty.$$

\textbf{Condition 3:}
Let $t\geq 0$ be an arbitrary integer. If $S_t=s$, we have
\begin{align*}
\ev[F_t\mid P_t]
&= \ev[ \beta_t^{-1} ( (\alpha_t - \alpha'_t)c_{s'} -\alpha_t Y_t + \alpha_t'Y_t' ) + (Y_t - Y'_t)\mid H_t]\\
&= \beta_t^{-1} \left( \ev[(\alpha_t - \alpha'_t) \mid H_t] \cdot \ev [c_{s'} \mid H_t] - \ev[\alpha_t \mid H_t] Y_t +  \ev[\alpha_t' \mid H_t] Y'_t\right) + (Y_t - Y'_t) \\
&= \beta_t^{-1} ( 0 \cdot \ev [c_{s'} \mid H_t] - \beta_t Y_t + \beta_t Y'_t) + (Y_t - Y'_t) \\
&= (-Y_t + Y'_t) + (Y_t - Y'_t) \\
&= 0
\end{align*}
On the second line, the assumption of \uniformCorruptions{} is what allows us to factor out the expectation of $c_{s'}$.
On the third line, we use the fact that $\ev[\alpha_t\mid H_t]=\ev[\alpha'_t\mid H_t]=\beta_t$ as in the previous condition.

If $S_t\not=s$, $F_t=0$, so $\ev[F_t\mid P_t] = 0$ as well. Thus,
$$\norm{\ev[F_t \mid P_t]}_\infty = 0 \leq \gamma \norm{\Delta_t}_\infty.$$

\textbf{Condition 4:}
We are required to show that there is some constant $C$ such that $\Var[F_t\mid P_t]\leq C(1+\norm{\Delta_t}_\infty)^2$, so it is sufficient to find $C>0$ such that $\Var[F_t\mid P_t]\leq C$, i.e., that the conditional variance of $F_t$ is bounded.
This holds by Popoviciu's inequality, as long as the random vector $F_t$ is bounded given the past.
We now show that it is bounded even without conditioning on the past.

As per Lemma~\ref{lem:alpha-bounded}, we have $\alpha_t(s,k)\in[0,1]$ for all $t, s, k$.
We also know that $\delta(s,k)$ and $c_{s'}$, and their sum, are bounded for all $t, s, k$, since $\mcS\times\mcK$ is finite.
It follows from Lemma~\ref{lem:Q-bounded}, applied element-wise, that $Q_t(s,k,k')$ is bounded for all $t,s,k,k'$.
Therefore $Y_t$ and $Y_t'$ are also bounded for all $t,s,k,k'$.

Finally, we have
$$\frac{\alpha_t}{\beta_t} = \frac{1}{\pi_K(k|s)} < \frac{1}{\lrscale}$$
so $\frac{\alpha_t}{\beta_t},\frac{\alpha_t'}{\beta_t}$ are bounded.
Hence, $F_t=\frac{\alpha_t'}{\beta_t}Y_t-\frac{\alpha_t}{\beta_t}Y_t'+\frac{\alpha_t}{\beta_t}c_{s'}-\frac{\alpha_t'}{\beta_t}c_{s'}$ is bounded at all times and indices.
\end{proof}

\begin{lemma}\label{lem:expectation-learning-rate-epsilon}
For all state-action pairs $(s,k)$, the expected value of $\alpha_t$, conditioned on the past, is the reciprocal of the state visit count for the current state, and zero otherwise:
\begin{align*}
\beta_t = \ev\left[\alpha_t(s,k)\mid H_t\right]&=\begin{cases}
\lrscale/M_t(s)&S_t=s\\
0&S_t\neq s
\end{cases}
\end{align*}
\end{lemma}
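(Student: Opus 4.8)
The plan is a direct computation by conditioning on the query action $K_t$. The learning rate $\alpha_t(s,k)$ is a deterministic function of the history $H_t$ together with the realized query $K_t$, and given $H_t$ the query is drawn from $\pi_K(\cdot\given s)$ independently of everything else; so the conditional expectation is just an average of $\alpha_t(s,k)$ over the possible values of $K_t$. The key observation, which makes the statement clean, is that the importance sampling factor $\pi_K(k\given s)^{-1}$ built into the learning rate exactly cancels the probability $\pi_K(k\given s)$ that the query lands on the action $k$.

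Concretely, I would first recall from Algorithm \ref{alg:adql} (and the definition used in the proof of Theorem \ref{thm:dmql-convergence}) that $\alpha_t(s,k)=\lrscale\,(M_t(s)\,\pi_K(k\given s))^{-1}$ when $S_t=s$ and $K_t=k$, and $\alpha_t(s,k)=0$ otherwise. Then I would split on whether $S_t=s$. If $S_t\neq s$, the random variable $\alpha_t(s,k)$ is identically zero, so its conditional expectation is zero, matching the second branch. If $S_t=s$, I would decompose over the events $\{K_t=k\}$ and $\{K_t\neq k\}$; only the former contributes, and since $\prob{K_t=k\given H_t}=\pi_K(k\given s)$ while $M_t(s)$ and $\pi_K(k\given s)$ are $H_t$-measurable, the product gives $\pi_K(k\given s)\cdot\lrscale\,(M_t(s)\,\pi_K(k\given s))^{-1}=\lrscale/M_t(s)$.

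There is no real obstacle here: the computation is routine once the cases are separated, and the only point requiring care is confirming that $M_t(s)$ and $\pi_K(k\given s)$ are measurable with respect to $H_t$, so that they can be pulled out of the conditional expectation. This holds because both are determined by the current visit counts and Q-values, which are functions of the history up to time $t$.
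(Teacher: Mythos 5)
Your proposal is correct and follows essentially the same route as the paper's proof: both split on whether $S_t=s$, sum over the two possible values of $\alpha_t(s,k)$ using $\prob{K_t=k\given H_t}=\pi_K(k\given s)$, and observe that the importance-sampling factor $\pi_K(k\given s)^{-1}$ in the learning rate cancels this probability to leave $\lrscale/M_t(s)$. Your added remark about $H_t$-measurability of $M_t(s)$ and $\pi_K(k\given s)$ is a fine (if implicit in the paper) point of care.
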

\begin{proof}
First suppose $S_t=s$.
Note that we have $\mathbb{P}[(S_t,K_t)=(s,k)\mid H_t]=\pi_K(K_t=k\mid H_t)$, since, given $H_t$, we know $S_t=s$ and $K_t$ depends only on the query policy.
By the definition of expected value, summing over the two possible values for $\alpha_t(s,k)$ in its definition, we have
\begin{align*}
    \ev\left[\alpha_t(s,k)\mid H_t\right]
    &=\lrscale\left(M_t(s)\,\pi_K(K_t=k\mid H_t\right)^{-1}\mathbb{P}\left[(S_t,K_t)=(s,k)\mid H_t\right]\\
    &\quad+0\cdot\mathbb{P}[(S_t,K_t)\neq(s,k)\mid H_t]\\
    &=
    \lrscale\left(M_t(s)\,\pi_K(k|s)\right)^{-1}\times\pi_{K}(k|s)\\
    &=\lrscale\left(M_t(s)\right)^{-1}
\end{align*}
as required.
In the case $S_t \neq s$, we have $\mathbb{P}\left[(S_t,K_t)=(s,k)\mid H_t\right]=0$, so
$\ev\left[\alpha_t(s,k)\mid H_t\right]=0$.
\end{proof}

\begin{lemma}\label{lem:alpha-bounded}
The learning rates are bounded: $\alpha_t(s,k)\in[0,1]$ for all $t,s,k$.
\end{lemma}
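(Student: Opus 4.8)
The plan is to prove the two inequalities $\alpha_t(s,k)\geq 0$ and $\alpha_t(s,k)\leq 1$ separately, splitting on whether the learning rate vanishes. Recall from the convergence proof that $\alpha_t(s,k) = \lrscale\,(M_t(s)\,\pi_K(k\mid s))^{-1}$ when $S_t=s$ and $K_t=k$, and $\alpha_t(s,k)=0$ otherwise. In the zero case both bounds are trivial, so all of the content lies in the nonzero case.

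For the lower bound, I would note that in the nonzero case $\alpha_t(s,k)$ is a ratio of strictly positive quantities: $\lrscale>0$ is a fixed hyperparameter; $\pi_K(k\mid s)>0$ since $k$ was genuinely sampled from $\pi_K$; and $M_t(s)\geq 1$ because the current state's visit count has been incremented at least once (the initial state is seeded with $M_0(s_0)=1$, and any state first reached at a later step has its count raised to at least $1$ by the update $M_{t+1}(s'):=M_t(s')+1$). Hence $\alpha_t(s,k)\geq 0$.

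The upper bound is the substantive step, and it is exactly what the $\max$ in the query policy's exploration rate is engineered to deliver. First I would invoke the structure of $\epsilon$-greedy: every action receives at least the uniform exploration mass $\epsilon/\size{\mcA}$. Then, since $\pi_K$ is $\epsilon$-greedy with $\epsilon = \max(1/M_t(s),\,\lrscale\,\size{\mcA}) \geq \lrscale\,\size{\mcA}$, I obtain the uniform floor $\pi_K(k\mid s) \geq \epsilon/\size{\mcA} \geq \lrscale$. Substituting this into the learning rate gives
$$\alpha_t(s,k) = \frac{\lrscale}{M_t(s)\,\pi_K(k\mid s)} \leq \frac{\lrscale}{M_t(s)\,\lrscale} = \frac{1}{M_t(s)} \leq 1,$$
where the final inequality reuses $M_t(s)\geq 1$.

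The main obstacle is conceptual rather than computational: recognizing that the importance-sampling factor $\pi_K(k\mid s)^{-1}$ in the step size could blow up for rarely queried actions, and that the algorithm forecloses this precisely by flooring the query exploration rate at $\lrscale\,\size{\mcA}$, which forces $\pi_K(k\mid s)\geq\lrscale$ and cancels the $\lrscale$ in the numerator. I would also remark that this same floor underlies the strict bound $\pi_K(k\mid s)^{-1}<\lrscale^{-1}$ invoked when establishing bounded variance in Condition 4 of the convergence proof, so this lemma is the natural place to record it.
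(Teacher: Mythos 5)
Your proof is correct and follows essentially the same route as the paper's: both arguments rest on the observation that the $\max(1/M_t(s),\lrscale\size{\mcA})$ floor on the query policy's exploration rate forces $\pi_K(k\mid s)\geq\lrscale$, which combined with $M_t(s)\geq 1$ bounds the nonzero case of $\alpha_t(s,k)$ within the unit interval. The only cosmetic difference is that the paper factors the bound as $\lrscale\,\pi_K(k\mid s)^{-1}\in(0,1)$ times $M_t(s)^{-1}\in(0,1]$, whereas you cancel $\lrscale$ first and bound by $1/M_t(s)$; these are equivalent.
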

\begin{proof}
By definition, $\alpha_t(s,k)$ is either $0$ or $\lrscale(M_t(s)\pi_K(k|s))^{-1}$.
Since $\pi_K$ is $\epsilon$-greedy with $\epsilon = \max(1/M_t(s),  \lrscale \size{A})$,
we have $1\geq\pi_K(k|s)>\lrscale>0$.
Thus $\lrscale (\pi_K(k|s))^{-1}\in(0,1)$.
Since $M_t(s)$ a positive integer, multiplication by $M_t(s)^{-1}\in(0,1]$ simply scales within the unit interval.
\end{proof}

\begin{lemma}\label{lem:Q-bounded}
Let $\alpha_t$, $\tilde D_t$, and $Q_t$ be sequences of random variables ($t\geq 0$ an integer) such that $\alpha_t$ is bounded by $[0,1]$ and $\tilde D_t$ is bounded, i.e., there are $\overline{D}$ and $\underline{D}$ such that $0\leq\alpha_t\leq 1$ and $\underline{D}\leq \tilde D_t\leq\overline{D}$ for all $t$.
Further, suppose the following equation is satisfied for all $t$:
\begin{align*}
Q_{t+1}=(1-\alpha_t)Q_t+\alpha_t \tilde D_{t+1}
\end{align*}
Also, suppose $Q_0$ is bounded, and the bounds, without loss of generality, are $[\underline{D},\overline{D}]$.
Then the variables $Q_t$ are bounded for all $t$ by $[\underline{D},\overline{D}]$.
\end{lemma}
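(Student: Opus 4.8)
The plan is to prove the claim by straightforward induction on $t$, exploiting the observation that under the hypothesis $0 \le \alpha_t \le 1$ the update $Q_{t+1} = (1-\alpha_t)Q_t + \alpha_t \tilde D_{t+1}$ is a \emph{convex combination} of $Q_t$ and $\tilde D_{t+1}$. Since a convex combination of two reals can never leave any interval containing both of them, the interval $[\underline{D}, \overline{D}]$ is preserved from one step to the next, and the result follows once the base case is in place.

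First I would establish the base case $Q_0 \in [\underline{D}, \overline{D}]$, which holds directly by the hypothesis that $Q_0$ is bounded with (without loss of generality) these same bounds. For the inductive step, I would assume $Q_t \in [\underline{D}, \overline{D}]$ and show $Q_{t+1} \in [\underline{D}, \overline{D}]$. Using the weights $1 - \alpha_t$ and $\alpha_t$, which are nonnegative and sum to one, the upper bound follows from $Q_t \le \overline{D}$ (inductive hypothesis) together with $\tilde D_{t+1} \le \overline{D}$ (hypothesis on the data), giving $Q_{t+1} \le (1-\alpha_t)\overline{D} + \alpha_t \overline{D} = \overline{D}$. The lower bound $Q_{t+1} \ge \underline{D}$ follows by the identical computation with the inequalities reversed and $\overline{D}$ replaced by $\underline{D}$. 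A single induction thus closes both endpoints simultaneously.

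There is no genuine obstacle here: this is a routine convexity argument whose heart is the one-line monotonicity of convex combinations. The only points requiring care are that the weights are guaranteed nonnegative precisely because $\alpha_t \in [0,1]$ --- which in the application to DA-QL is supplied by \cref{lem:alpha-bounded} --- and that the \emph{same} interval $[\underline{D}, \overline{D}]$ bounds both the initial value $Q_0$ and every $\tilde D_t$, so that the induction never needs to enlarge the target interval. In the context of \cref{thm:dmql-convergence}, this lemma is applied element-wise to the vector-valued process, with $\tilde D_{t+1} = \delta(s,k) + c_{s'}$ bounded because $\mcS \times \mcK$ is finite.
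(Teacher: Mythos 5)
Your proof is correct and follows essentially the same route as the paper's: induction on $t$, with the inductive step resting on the fact that a convex combination with weights $1-\alpha_t,\alpha_t\in[0,1]$ cannot leave an interval containing both $Q_t$ and $\tilde D_{t+1}$. The paper phrases the inductive step as a case split on whether $Q_t\le\tilde D_{t+1}$, but this is only a cosmetic difference from your direct bound $(1-\alpha_t)\overline{D}+\alpha_t\overline{D}=\overline{D}$.
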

\begin{proof}
By induction on $t$, we have the base case by assumption.
Now assume $\underline{D}\leq Q_t\leq\overline{D}$ as inductive hypothesis.
First let's prove the upper bound by cases on whether $Q_t$ or $\tilde D_{t+1}$ is smaller.
If $Q_t\leq \tilde D_{t+1}$ then
\begin{align*}
(1-\alpha_t)Q_t+\alpha_t \tilde D_{t+1}&\leq (1-\alpha_t)\tilde D_{t+1}+\alpha_t \tilde D_{t+1}\\
&=\tilde D_{t+1}\\
&\leq\overline{D}
\end{align*}
where the first line holds because $(1-\alpha_t)\geq 0$.
If $\tilde D_{t+1}< Q_t$, then $(1-\alpha_t)Q_t+\alpha_t \tilde D_{t+1} < (1-\alpha_t)Q_t + \alpha_t Q_t=Q_t\leq \overline{D}$.
The lower bound is similar.
\end{proof}

\section{Analysis of importance sampling correction in simple CFMDP}\label{app:importance-sampling}

This section explains why the importance sampling correction in DA-QL is necessary, and along the way illustrates a subtle source of tampering incentives: \emph{confounding}.
Our example shows that even in a simple setting with \uniformCorruptions, decoupling is necessary to avoid tampering incentives.
This motivates rigorous analysis of incentives: even in cases where the action being optimized cannot causally affect the feedback, tampering incentives can arise through indirect pathways -- in this case, confounding.

We provide a simple CFMDP satisfying \uniformCorruptions, Example \ref{ex:simple_cfmdp}, in which DA-QL without the importance sampling correction fails to converge to the optimal policy.
In this CFMDP, the first action receives higher true approval but the second action receives higher observed approval.

\begin{example}
\label{ex:simple_cfmdp}
  Consider the CFMDP with $\mcS = \mcA = \{x^0,x^1\}$, a deterministic transition function such that $S_{t+1} = A_t$, and additive corruption $c_{x^s} = 10s$.
The agent is trained from approval with $\mcK = \mcA$ and feedback given by $\delta(\cdot, x^0) = 1$ and $\delta(\cdot, x^1) = 0$.
\end{example}

Figure \ref{fig:adql_example} shows the results of applying DA-QL without the importance sampling correction, by changing the learning rate in Algorithm \ref{alg:adql} from $\lrscale(M_t(s) \pi_K(s, k))^{-1}$ to $\lrscale M_t(s)^{-1}$.

\begin{figure}[h]
    \centering
    \includegraphics[width=0.7\columnwidth]{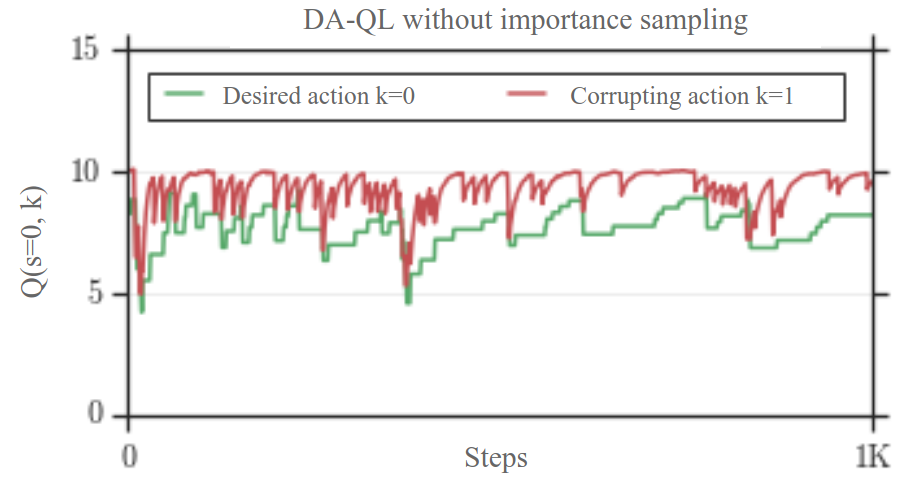}
\vspace{-4mm}
\caption{Learning dynamics of DA-QL without importance sampling.
Empirically, Q-values for the corrupting action (red) remain above Q-values for the desired action (green).
The Q-values of the desired action steadily increase, until exceeding the Q-values for the corrupting action, which is followed by a sudden drop.
The main text explains this observation in terms of confounding.
}
\label{fig:adql_example}
\end{figure}

In this example, at each step, the expected observed approval always favors the desired action $x^0$ over the corrupting action $x^1$, conditional on the current policy.
This follows from \uniformCorruptions: the amount of expected corruption does not depend on the query, so the expected observed approval for each query is just the true approval shifted by the same expected corruption.

However, perhaps surprisingly, we observe empirically that the learned Q-values favor the corrupting action, $x^1$, on almost all steps (Figure \ref{fig:adql_example}).
We explain this observation via \emph{confounding}: the policy is a common cause for both the query and taken actions, and thus acts as a confounding variable.
In other words, while the query and taken actions are independent conditional on the policy, they become dependent after marginalizing out for variations in the policy throughout training.
The result is that the corrupting action $x^1$ is more likely to be queried on steps where the corrupting action is also taken.
This causes queries about the corrupting action to become correlated with positive updates to the Q-values, despite having no direct causal influence on the corruption.

In Figure \ref{fig:adql_example}, the negative spikes in $Q(s=x^0, k=x^0)$ arise due to the flip-side of this effect -- whenever the relative ordering of Q-values switches to preferring the desired action $x^0$, that action becomes more likely to be both the queried and taken.
The absence of corruption on these steps then causes the observed negative spike.

Importance sampling counterbalances this effect by applying larger updates inversely proportional to the probability of updating each particular query.
This ensures that the expected Q-update at each step favors the desired action, as shown in Proposition \ref{prop:adql_local_incentives}.

To explain this effect in greater detail, we now provide additional plots over various timescales, as well as analogous results for DA-QL with importance sampling and for DA-PG, which both converge to the optimal policy as expected.

\subsection{DA-QL}

\begin{figure}[h]
\centering
\begin{subfigure}{0.48\textwidth}
    \includegraphics[width=\textwidth]{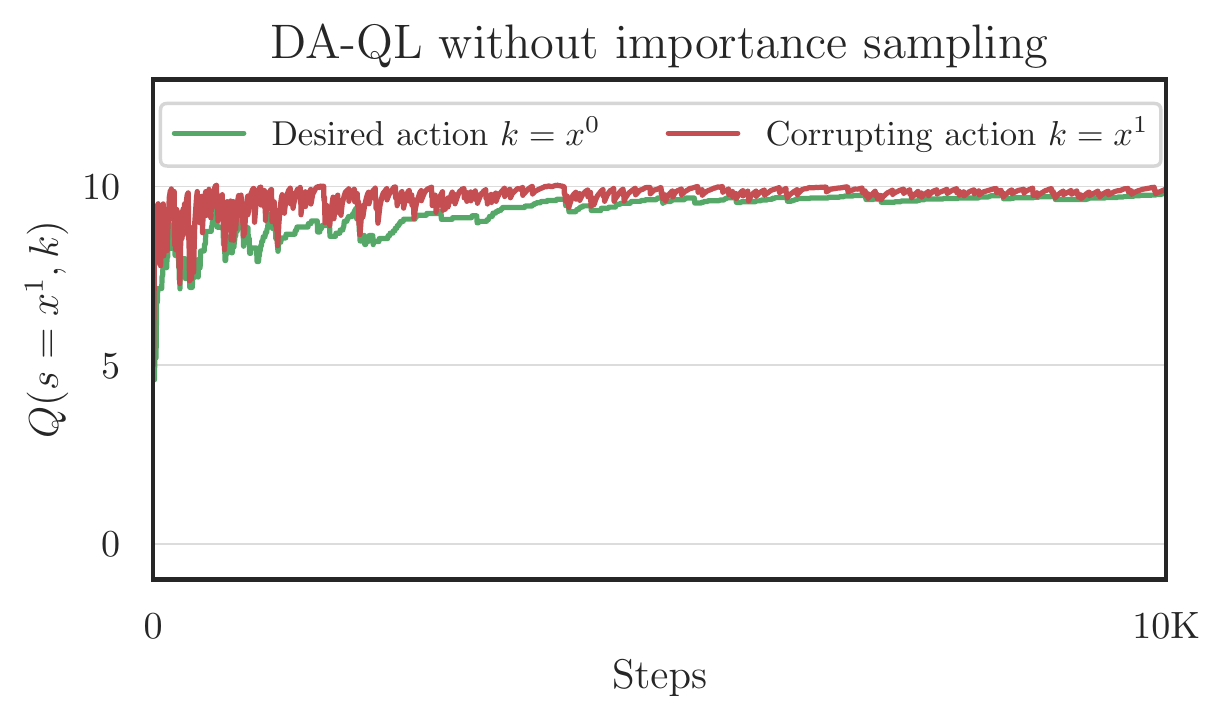}
\end{subfigure}
\begin{subfigure}{0.48\textwidth}
    \includegraphics[width=\textwidth]{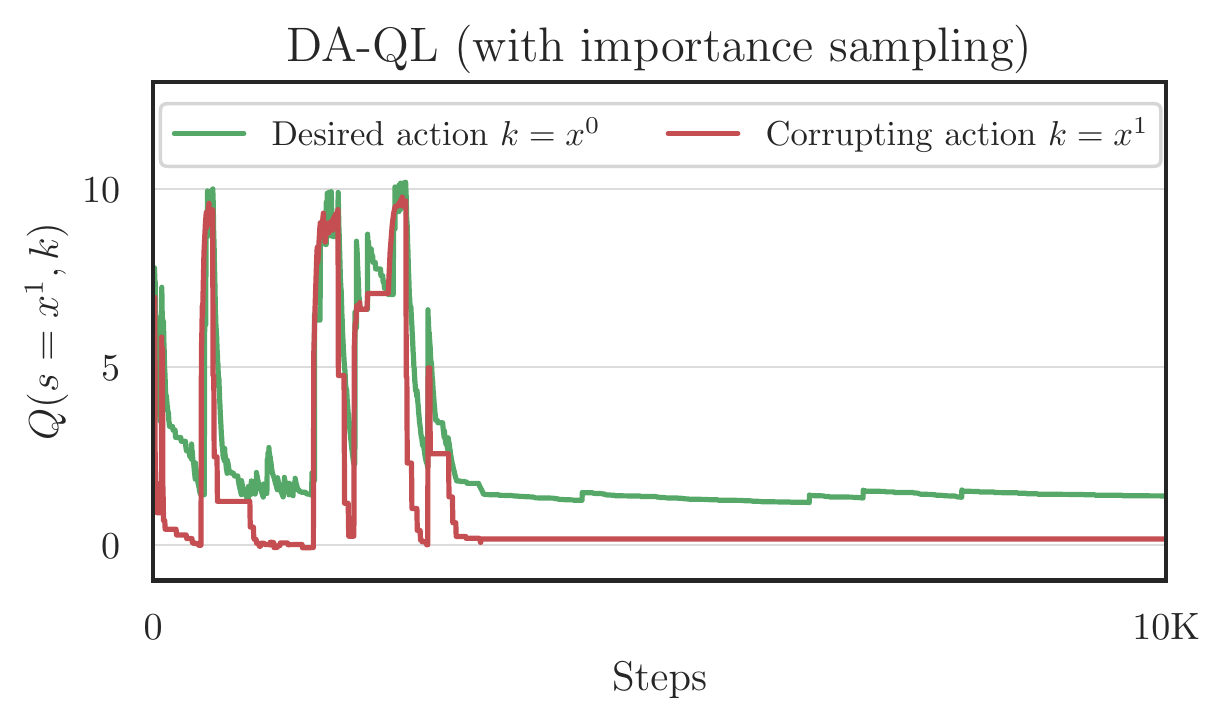}
\end{subfigure}
\caption{DA-QL without and with importance sampling.
As discussed in the main text, without the importance sampling correction, Q-values for the corrupting action (red) remain above Q-values for the desired action (green).
With the importance sampling correction, this pattern is reversed: the Q-values favor the desired action, and converge to the true Q-values of $1$ and $0$ respectively.
}
\label{fig:adql_example_extra}
\end{figure}

\textbf{Importance sampling correction produces aligned incentives.}
Figure \ref{fig:adql_example} illustrates that, with the importance sampling correction ablated, DA-QL may fail to learn the desired policy.
In Figure \ref{fig:adql_example_extra}, we provide side-by-side plots of the Q-values over time for DA-QL with and without the importance sampling correction.
We observe that with the importance sampling correction, DA-QL reliably converges to the optimal policy and true Q-values.

Note that this observation is more or less immediate from aligned local update incentives (Proposition \ref{prop:adql_local_incentives}) -- with the importance sampling correction, the expected Q-update at each step is more positive for the desired action $x^0$, whereas without the importance sampling correction, this can be reversed.
It is thus expected that with the importance sampling correction, DA-QL learns to favor the desired action $x^0$.

\begin{figure}[h]
\centering
\begin{subfigure}{0.48\textwidth}
    \includegraphics[width=\textwidth]{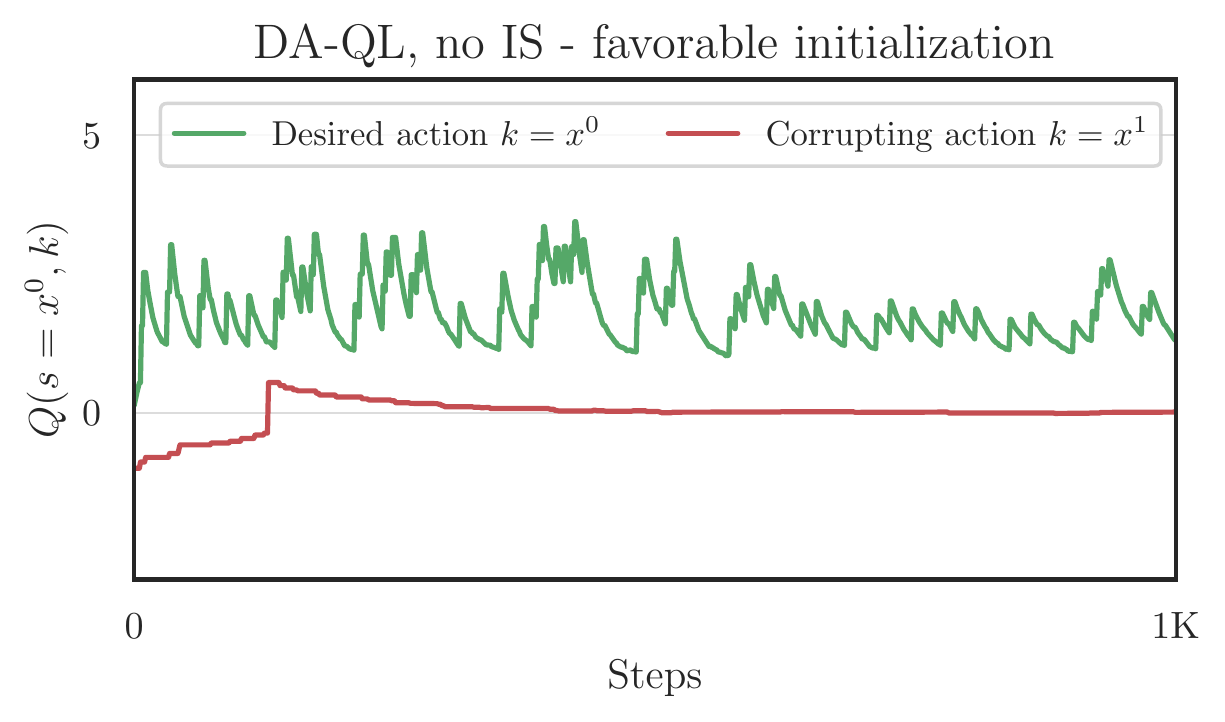}
\end{subfigure}
\begin{subfigure}{0.48\textwidth}
    \includegraphics[width=\textwidth]{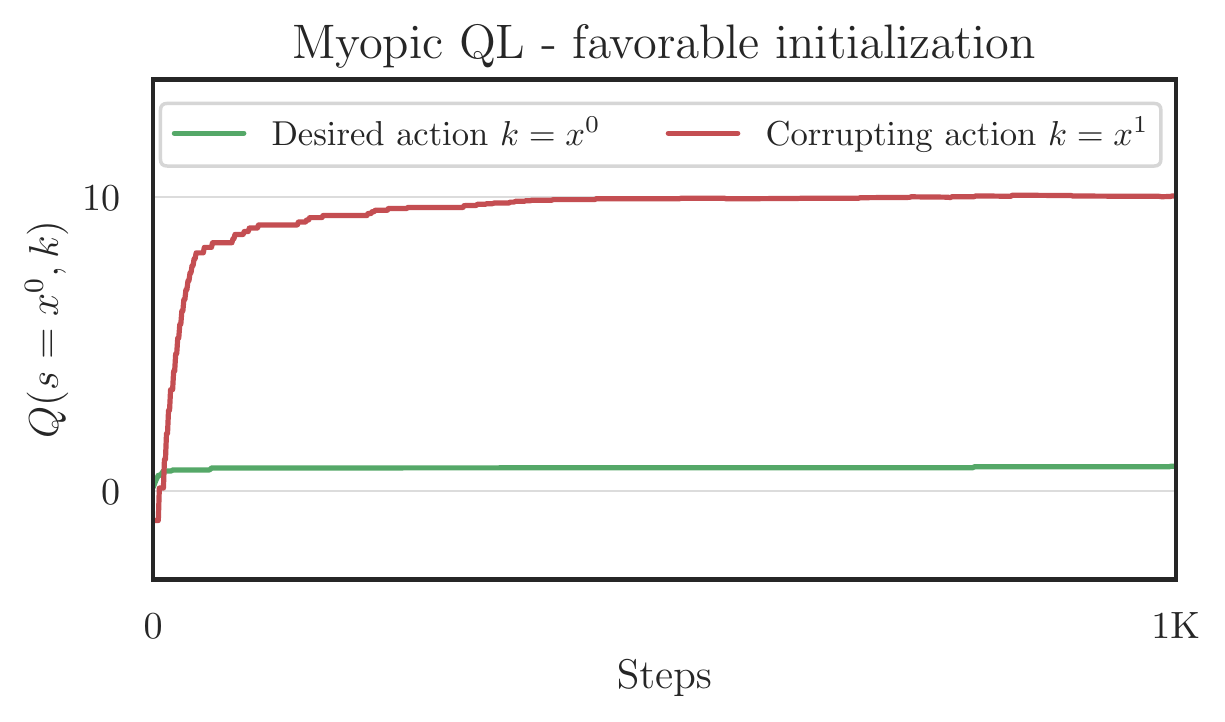}
\end{subfigure}
\caption{DA-QL without importance sampling correction, but favorable initialization.
With favorable initializations, DA-QL without importance sample correction can still learn the desired behavior, in contrast to \Approvalrl{} QL which immediately converges to a tampering policy.
For these plots, we initialize the Q-values of the desired and corrupting action to $0$ and $-1$ respectively, rather than initializing both to $5$ as in other figures.
}
\label{fig:adql_good_init}
\end{figure}

\textbf{Even without the importance sampling correction, decoupled feedback can still produce desired behavior.}
Figure \ref{fig:adql_good_init} shows that decoupled feedback confers significant benefits on its own.
In particular, if we initialize Q-values such that tampering is unlikely under the initial policy, DA-QL even without importance sampling will frequently converge to the desired policy.
This is most easily understood by contrast with \Approvalrl{} QL - as seen through Remark \ref{remark:myopic}, each time \Approvalrl{} QL takes the corrupting action, this directly results in high observed reward.
In contrast, when using decoupled feedback, the agent will only observe high reward if it also takes the corrupting action.
This may happen quite rarely if tampering is unlikely for the initial policy.
More generally, the correlation introduced by confounding might be sufficiently weak that it is dominated by the advantage in true approval feedback.

While contingent on details such as initialization and task-specific features, we note that in many real-world tasks, this property may provide a significant advantage for decoupled feedback even without full decoupling.
Especially if the learning environment is designed with checks against tampering, tampering may be unlikely under the initial policy.

\textbf{Quantitative statistics.}
To provide an overall sense of learning behavior, we evaluate 100 runs each of DA-QL, both with and without the importance sampling correction, and \Approvalrl{} QL, where the initial Q-values are sampled from a Gaussian distribution with mean $0$ and standard deviation $3$.
For each run, we measure the fraction of steps on which the Q-values favor the desired action.
This distribution is bimodal, since in each run, the algorithm converges to either the desired or corrupting action.
For DA-QL, in 98\% of runs, the desired action is favored over 80\% of the time (and in all runs, over 50\%).
For DA-QL without the importance sampling correction, this happens in 15\% of runs, and for \Approvalrl{} QL, 0\% of runs.
In short, DA-QL produces the desired policy, \Approvalrl{} QL produces a tampering policy, and for DA-QL without the importance sampling correction, it depends on initialization.

\subsection{DA-PG}

\begin{figure}[h]
\centering
\begin{subfigure}{0.48\textwidth}
    \includegraphics[width=\textwidth]{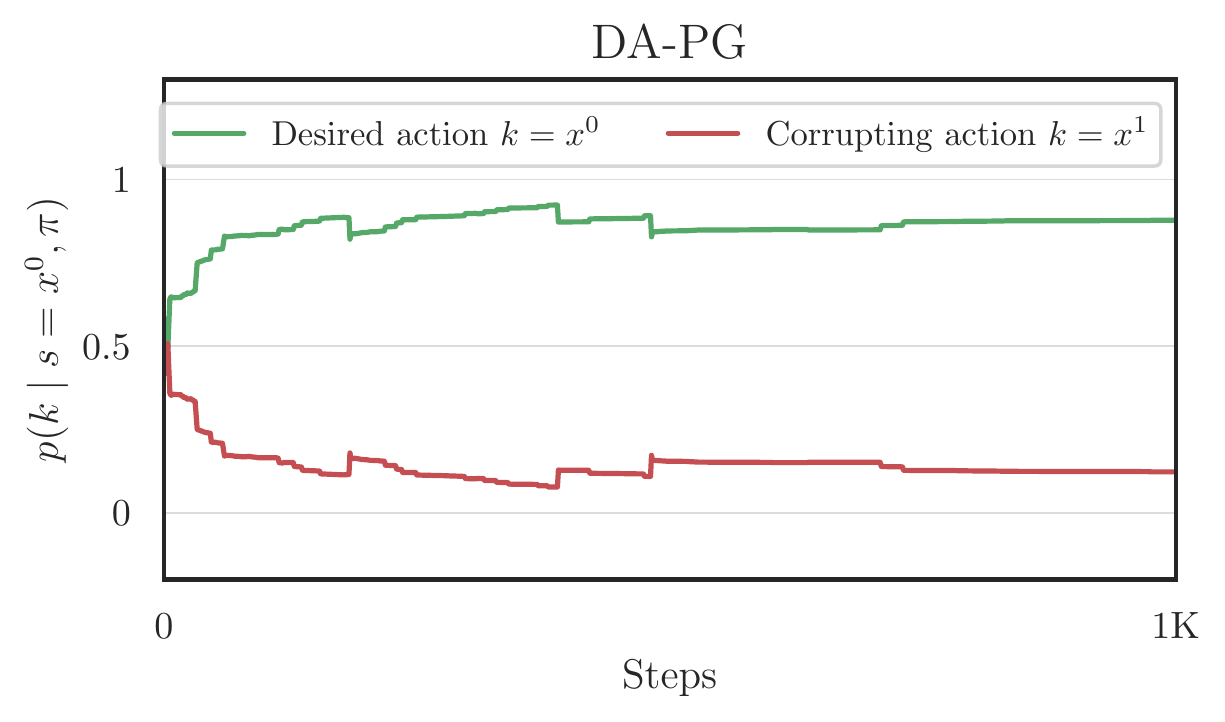}
\end{subfigure}
\begin{subfigure}{0.48\textwidth}
    \includegraphics[width=\textwidth]{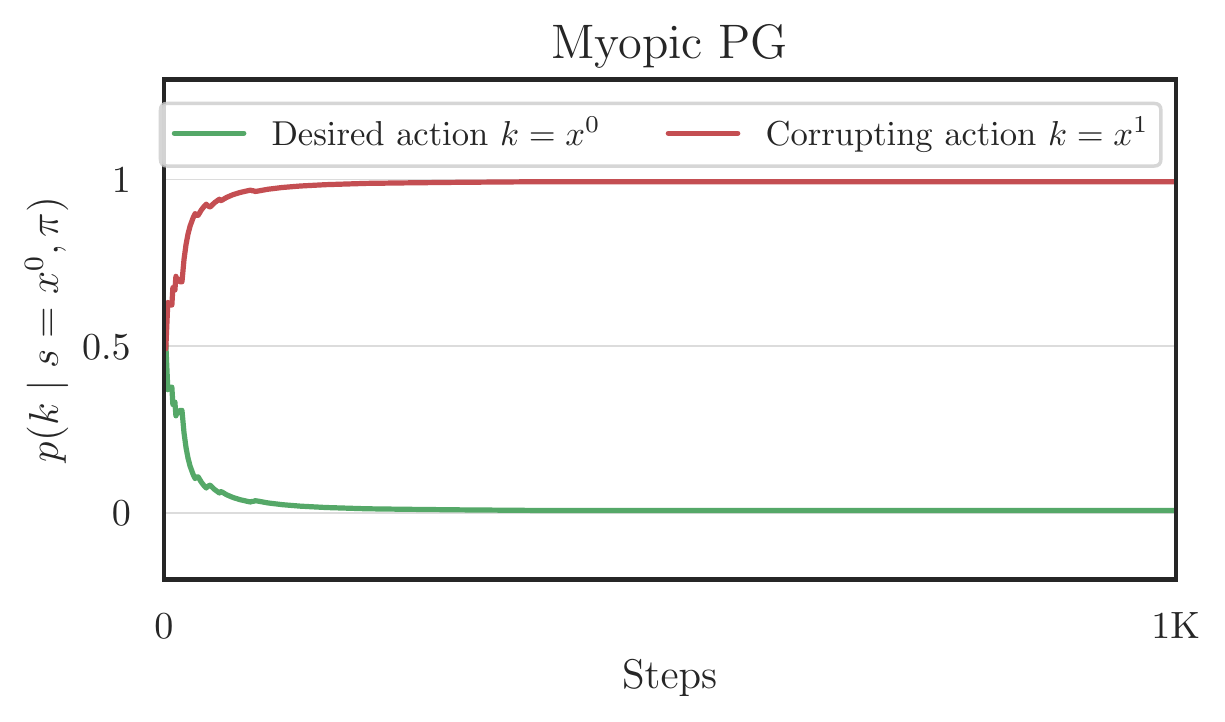}
\end{subfigure}
\caption{Behavior of DA-PG. DA-PG quickly learns to prefer the desired action, whereas \Approvalrl{} PG quickly learns to prefer the corrupting action.}
\label{fig:adpg_simple_cfmdp}
\end{figure}

For completeness, we also report results for DA-PG in this simple CFMDP.
We represent the policy with tabular Q-values, which define a Boltzmann or softmax policy, and add an entropy bonus.
Figure \ref{fig:adpg_simple_cfmdp} shows that DA-PG quickly learns to prefer the desired action, whereas \Approvalrl{} PG learns to prefer the corrupting action.
Quantitatively, we again perform 100 runs of each agent using the same initialization distribution used for DA-QL.
We observe that in 95\% of runs, DA-PG favors the desired action over 80\% of the time (for very unfavorable initializations, there is insufficient exploration for the agent to ever learn).
This happens 0\% of the time for \Approvalrl{} PG.

\section{Experiments on procedural tabular CFMDPs}
\label{app:tabular_cfmdp}

We include additional experiments on procedurally generated tabular CFMDPs.
We run on 1000 such CFMDPs, each with $|\mcS| = 10, |\mcA| = 4$.
For each CFMDP, the underlying MDP uses a deterministic reward function, fixed by sampling from a log-normal distribution independently for each state and action.
The transition matrix is sampled from a Dirichlet distribution, i.e., for each state-action pair, the distribution over resulting states is sampled from a Dirichlet.
We then train a Q-learning agent in the underlying (uncorrupted) CFMDP which we use as the approver for our approval-based experiments.

The corruption function is given by the probabilities of a Dirichlet distribution, multiplied by a large scale to ensure that the scale of corruptions is larger than the scale of rewards.
We then train both a DA-PG and \Approvalrl{} PG agent using the approval signal.
We consider the agent successful if the agent's total return according to the true reward function exceeds 80\% of the return achieved by the approver.
Across the 1000 CFMDPs, DA-PG succeeds in 100\% of CFMDPs, whereas \Approvalrl{} PG succeeds in only 35\%.
As expected, \Approvalrl{} PG achieves higher corrupted approval than DA-PG on average due to receiving positive corruptions, while achieving lower true approval.

\section{Additional Experiments}
\label{app:additional_exps}

\subsection{Results for Seek-Avoid}
\label{app:seekavoid}

\begin{table}[ht]
\centering
  \begin{tabular}{p{0.3\columnwidth}cc}
\toprule
    \textbf{Algorithm} &  \thead{\textbf{Tampering} \\ \textbf{(\% Episodes)}} & \thead{\textbf{Return} \\ \textbf{(Mean)}}\tabularnewline
\midrule
    Standard RL & 68\% & \hp1.6 \tabularnewline
    Non-embedded & \hp1\% & 14.1 \tabularnewline
\midrule
    \Approvalrl{} QL & \hp1\% & 10.0 \tabularnewline
    DA-QL (no IS corr.) & \hp1\% & 11.6 \tabularnewline
    DA-QL & \hp3\% & 13.4 \tabularnewline
\midrule
  \Approvalrl{} PG & \hp1\% & 14.4 \tabularnewline
    DA-PG & \hp1\% & 13.7 \tabularnewline
\bottomrule \tabularnewline
\end{tabular}
\caption{\textbf{Seek-Avoid results.}
All agents other than the Standard RL agent perform fairly well on true return for Seek-Avoid.
We find that all myopically optimized agents do not tamper, which we attribute to using a tampering-avoiding approver, whereas the Unlock Door experiments use a tampering-agnostic approver.
}
\label{tab:seekavoid_results}
\end{table}

We evaluate our main approaches from \S\ref{subsec:comparisons} on Seek-Avoid, a simple task involve gathering positive reward objects and avoiding negative reward objects.
Unlike our main experiments, which use a tampering-agnostic approver, for these experiments, we train the approver Q-network to avoid tampering, by providing a tampering penalty based on oracle access to the true and observed reward.
Our results for Seek-Avoid are summarized in Table \ref{tab:seekavoid_results}.
For Seek-Avoid, we find that approval-based agents are also able to solve the task and obtain high true return.
We further find that \approvalrl{} RL agents avoid tampering.
We observe that on Seek-Avoid, tampering-agnostic approvers tamper often, as do agents trained using this approval signal, so we attribute the difference to the use of a tampering-avoiding approver.
As discussed in \S\ref{sec:myopic_sufficiency}, incentives in myopically optimized agents are dependent on the optimal trajectories.
For Seek-Avoid specifically, it is possible that the tampering-avoiding approval signal is sufficient to avoid tampering incentives for myopically optimized \approvalrl{} RL agents, or just that it increases the difficulty of finding tampering solutions.

\subsection{Imitation Learning Results}
\label{app:imitation_exps}
For imitation learning, we ran experiments with both embedded and non-embedded feedback.
While the agents performed well with non-embedded feedback, with embedded feedback, we observed significantly worse performance (roughly half the return for DAgger, and a quarter for Offline Imitation).
We believe this is largely attributable to the specifics of our Register implementations.
Random actions can frequently cause a constant corruption to the feedback throughout an episode.
For approval-based algorithms, this adds noise but does not affect the advantage estimate.
However, for imitation learning with our encoding method, this causes the agent to receive incorrect demonstrator actions.
We plan to address this and investigate imitation learning with embedded feedback more carefully in future work.

\end{document}